\newcommand{\argmax}{\mathop{\rm argmax}\limits}
\newtheorem{thm}{Theorem}
\newcommand{\bhline}[1]{\noalign{\hrule height #1}}
\begin{document}
\title{PCA- and SVM-Grad-CAM for Convolutional Neural Networks: Closed-form Jacobian Expression} 

\author{Yuto Omae, Nihon University (oomae.yuuto@nihon-u.ac.jp)
%\thanks{This work was supported in part by JSPS Grant-in-Aid for Scientific Research (C) (Grant No. 23K11310).}
%\thanks{Y. Omae is with College of Industrial Technology, Nihon University, Chiba, 275-8575, Japan (e-mail: oomae.yuuto@nihon-u.ac.jp).}
}
\markboth{Preprint version (Aug. 2025).}
{Y. Omae \MakeLowercase{\textit{et al.}}: PCA- and SVM-Grad-CAM for Convolutional Neural Networks: Closed-form Jacobian Expression}

\maketitle

\begin{abstract}
Convolutional Neural Networks (CNNs) are an effective approach for classification tasks, particularly when the training dataset is large.
Although CNNs have long been considered a black-box classification method, they can be used as a white-box method through visualization techniques such as Grad-CAM.
When training samples are limited, incorporating a Principal Component Analysis (PCA) layer and/or a Support Vector Machine (SVM) classifier into a CNN can effectively improve classification performance.
However, traditional Grad-CAM cannot be directly applied to PCA and/or SVM layers.
It is important to generate attention regions for PCA and/or SVM layers in CNNs to facilitate the development of white-box methods.
Therefore, we propose ``PCA-Grad-CAM'', a method for visualizing attention regions in PCA feature vectors, and ``SVM-Grad-CAM'', a method for visualizing attention regions in an SVM classifier layer.
To complete our methods analytically, it is necessary to solve the closed-form Jacobian consisting of partial derivatives from the last convolutional layer to the PCA and/or SVM layers.
In this paper, we present the exact closed-form Jacobian and the visualization results of our methods applied to several major datasets.
\end{abstract}

%\begin{IEEEImpStatement}
%\end{IEEEImpStatement}

\begin{IEEEkeywords}
Grad-CAM, Convolutional Neural Network, Principal Components Analysis, Support Vector Machine, Jacobian
\end{IEEEkeywords}

%\IEEEPARstart{T}{his}
\section{Introduction} \label{sec1_int}
Since Convolutional Neural Networks (CNNs) are an effective approach for various classification tasks, they have been widely used in diverse fields, such as medical diagnosis~\cite{ref_med_cnn1, ref_med_cnn2}, agriculture~\cite{ref_cnn_agri1, ref_cnn_agri2}, industrial applications~\cite{ref_cnn_ind1, ref_cnn_ind2}, and others.
At the time AlexNet~\cite{ref_alexnet}, which marked the beginning of CNN's widespread adoption, was introduced in 2012, there was no way to determine which attention regions of the image contributed to the model's predictions.
Grad-CAM, proposed by Selvaraju et al.~\cite{grad_cam_origin} in 2017, enabled the visualization of attention regions in CNNs.
That method is still widely used in various studies, including \cite{ref_use_grad_cam1}, \cite{ref_use_grad_cam2}, and \cite{ref_use_grad_cam3}.
When using CNNs in medical applications, it is recommended to use visualization methods such as Grad-CAM~\cite{ref_med_cam_need}.

\begin{figure*}[t]
    \begin{center}
        \includegraphics[scale=0.45]{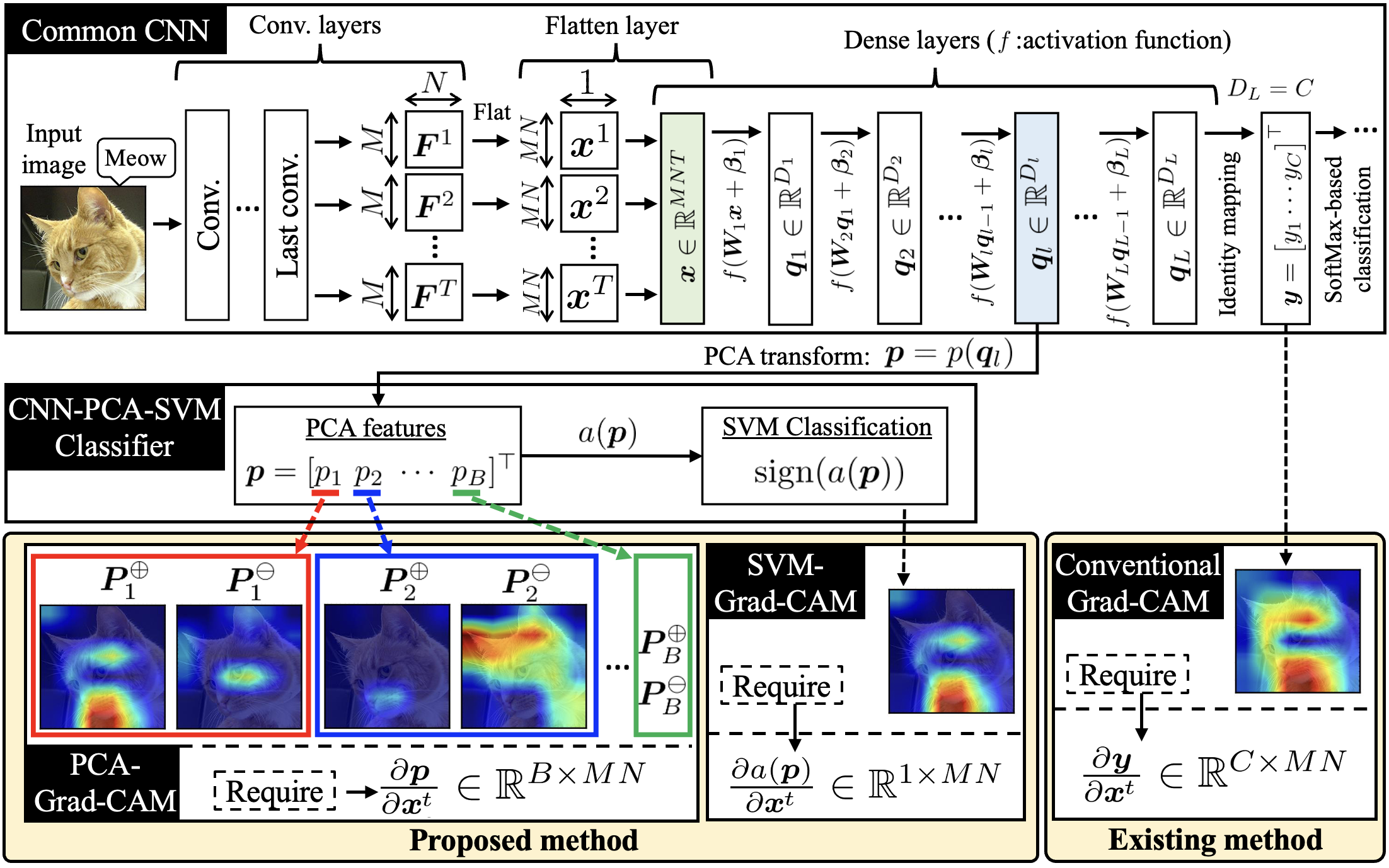}
        \caption{Overview of the existing method (conventional-Grad-CAM \cite{grad_cam_origin}) and the proposed methods (PCA- and SVM-Grad-CAM).}
        \label{fig_ov_view}
    \end{center}
\end{figure*}

Even though Grad-CAM can only be applied to plain CNNs, modern architectures often include specialized components, such as principal component analysis (PCA) and/or support vector machine (SVM) layers, to enhance performance.
There is some evidence that these ingenuities lead to improved performance.
For example, Jean et al.~\cite{ref_cnn_svm1} reported an improvement in accuracy from 95.64\% to 98.32\% by implementing an SVM layer into a CNN for classifying cocoa beans.
Similar findings have also been reported for tasks such as the detection of cervical cancer cells~\cite{ref_cnn_svm2} and microvascular morphological type recognition~\cite{ref_cnn_svm3}.
Moreover, several studies have implemented both PCA and SVM into CNNs.
Mahmood et al.~\cite{ref_pca_svm1} reported that applying both PCA and SVM to CNNs improved accuracy from 92.6\% to 94.7\% for the Caltech-101 classification task.
Similar findings have also been reported for tasks such as facial recognition~\cite{ref_pca_svm2}, hand gesture recognition~\cite{ref_pca_svm3}, face verification~\cite{ref_pca_svm4}, oil rig classification~\cite{ref_pca_svm5}, and epilepsy detection~\cite{ref_pca_svm6}.
As we can see from these studies, applying both PCA and SVM to CNNs is a popular approach.
However, the conventional Grad-CAM~\cite{grad_cam_origin} cannot be applied to these models; that is, they function as black-box models.

Therefore, we propose ``PCA-Grad-CAM'' and ``SVM-Grad-CAM'' to visualize the focus regions of PCA and SVM layers embedded in CNNs, respectively.
An overview comparing conventional Grad-CAM, PCA-Grad-CAM, and SVM-Grad-CAM is shown in Figure~\ref{fig_ov_view}.
``Conventional-Grad-CAM'' proposed by Selvaraju et al.~\cite{grad_cam_origin} can only be applied to plain CNNs.
Unlike this, PCA-Grad-CAM is a method for visualizing the focus regions of each principal component; that is, if the number of principal components is $B$, then the number of PCA-Grad-CAM visualizations is also $B$.
As shown in Figure~\ref{fig_ov_view}, $\bm{P}^{\omega}_1$, where $\omega \in \{\oplus, \ominus\}$, represents the PCA-Grad-CAM corresponding to the first dimension of the PCA feature vector.
$\bm{P}^\oplus_1$ represents the region that positively influences the first principal component, while $\bm{P}^\ominus_1$ represents the region that negatively influences it.
Similarly,  $\bm{P}^\omega_2$ where $\omega \in \{\oplus, \ominus\}$ corresponds to the second principal component of the PCA feature vector.
From $\bm{P}^{\oplus}_1$ and $\bm{P}^{\ominus}_2$ in Figure~\ref{fig_ov_view}, it can be inferred that the cat's mouth positively influences the first principal component, while the cat's ear negatively influences the second principal component.

Since there are no existing methods for visualizing the focus regions of each principal component in CNNs, the proposed method, PCA-Grad-CAM, could be considered a potentially useful approach.
In contrast, the second proposed method, SVM-Grad-CAM, is designed to visualize the focus region of an SVM layer embedded in CNNs.
As described in the previous paragraph, combining a CNN with an SVM is an influential approach; however, such a model functions as a black box.
Since SVM-Grad-CAM aims to address this issue, it may serve as a helpful tool for improving model interpretability.

Additionally, gradients of certain functions in CNNs, PCA, and SVM are required for the mathematical representation of PCA- and SVM-Grad-CAMs.
Since these are multi-input multi-output functions, their gradients can be represented in Jacobian form.
This paper provides exact closed-form expressions for these Jacobians.
Deriving a closed-form solution means expressing the result explicitly, as if exposing every component of the equation to full view, without relying on hidden computations or iterative approximations.
Therefore, we also provide some theoretical analysis based on these closed-form expressions.

\begin{table*}[t]
\tabcolsep = 3.5pt
\caption{CAM family comprising various approaches.}
  \label{tab_0}
  \centering
  \begin{tabular}{lcccl}
    \bhline{1.0pt}
\multicolumn{1}{c}{Method [ref. no.]} & Year &Citation*& Gradient? & Applied cases (detection or prediction target) \\ \hline
CAM~\cite{ref_cam_origin} (Pioneer) & 2015 &13,409& No & Food localization~\cite{ref_apl_cam1}, Diabetic retinopathy~\cite{ref_apl_cam2}, Building defects~\cite{ref_apl_cam3} \\ \hdashline
Grad-CAM~\cite{grad_cam_origin} &2016&31,032& Yes  &Fall armyworm~\cite{ref_use_grad_cam1}, Microscopic algae~\cite{ref_use_grad_cam2}, Gas--liquid jet~\cite{ref_use_grad_cam3}\\
Grad-CAM++~\cite{ref_gradcam_pp_origin} & 2018&3,748& Yes &Lung Cancer~\cite{ref_gcampp_app1}, Brain tumor~\cite{ref_gcampp_app2}, Malignant skin diseases~\cite{ref_gcampp_app3}\\
XGrad-CAM~\cite{ref_xgrad_origin} & 2020&397& Yes&SARS-COVID~\cite{ref_xg_app1}, Brain tumor~\cite{ref_xg_app2}, Specific emitter identification~\cite{ref_xg_app3}\\
Relevance-CAM~\cite{ref_relecam_origin} & 2021&101& Yes  &Vertebral compression fracture~\cite{ref_rele_app1}, Small bowel~\cite{ref_rele_app2}, Cities identification~\cite{ref_rele_app3}\\
HiRes-CAM~\cite{ref_hirescom_origin} & 2020&174& Yes &Endoscopic image~\cite{ref_hires_app1}, Insect pests~\cite{ref_hires_app2}, Ship detection~\cite{ref_hires_app3}\\
Layer-CAM~\cite{ref_layercam_origin} &2021 &811& Yes &Aerial scene~\cite{ref_lay_cam_app1}, Pneumoconiosis detection~\cite{ref_lay_cam_app2}, Satellite images~\cite{ref_lay_cam_app3}\\ \hdashline
Score-CAM~\cite{ref_scorecam_origin} & 2020&1,410& No &Plant disease~\cite{ref_score_cam_app1, ref_score_cam_app3}, Skin lesion~\cite{ref_score_cam_app2}, Genotype~\cite{ref_score_cam_app4}\\
Ablation-CAM~\cite{ref_ablationcam_origin} &2020&590& No &Plant disease~\cite{ref_ab_cam_app1}, 
Graphene strength~\cite{ref_ab_cam_app2}, Pulmonary edema~\cite{ref_ab_cam_app3}, Medical image~\cite{ref_eigen_cam_app1}\\
Shap-CAM~\cite{ref_shap_cam_ori} &2022&54& No & - \\
Shaplay-CAM~\cite{ref_shaplaycam_ori} &2025&3& No & -\\
LIFT-CAM~\cite{ref_lift_cam_ori} &2021&125& No &Skin diseases~\cite{ref_lift_cam_app1}\\
Eigen-CAM~\cite{ref_eigen_cam_ori} &2020&545& No &Medical image~\cite{ref_eigen_cam_app1}, Periodontal bone loss severity~\cite{ref_eigen_cam_app2}, Power grid fault location~\cite{ref_eigen_cam_app3}\\
    \bhline{1.0pt}
    \multicolumn{5}{r}{*: Total citations of the papers proposing each method. These values were verified by Google Scholar on 18 July 2025.}
  \end{tabular}
\end{table*}

\section{Related works} 
CAM, proposed by Zhou et al.~\cite{ref_cam_origin}, was a pioneering visualization method for CNNs and has been applied in many studies, e.g.,\cite{ref_apl_cam1}, \cite{ref_apl_cam2}, \cite{ref_apl_cam3}.
However, this method can only be applied to a limited set of CNNs that include a global average pooling layer and a classifier layer immediately after the last convolutional layer.
To overcome this limitation, Grad-CAM was proposed by Selvaraju et al.~\cite{grad_cam_origin}.
Since Grad-CAM can be applied to various CNN architectures, it has become a widely used method.
Moreover, in addition to Grad-CAM, various CAM-based methods have been proposed, as shown in Table~\ref{tab_0}.
In this section, we first introduce these methods, and then describe the novelty and academic contributions of our proposed approach.

\subsection{Gradient-based CAM}
Methods for visualizing the focus regions of CNNs can be divided into gradient-based and gradient-free approaches.
We first introduce gradient-based methods, including Grad-CAM++~\cite{ref_gradcam_pp_origin}, XGrad-CAM~\cite{ref_xgrad_origin}, Relevance-CAM~\cite{ref_relecam_origin}, HiRes-CAM~\cite{ref_hirescom_origin}, and Layer-CAM~\cite{ref_layercam_origin}.

Grad-CAM++~\cite{ref_gradcam_pp_origin} is a method that applies higher-order derivatives to compute the weights of feature maps.
Grad-CAM++ is superior to Grad-CAM with respect to object detection of a target class.
XGrad-CAM~\cite{ref_xgrad_origin} is a method that uses both activation values and gradients of feature maps to compute the feature map weights.
Grad-CAM uses the sum of the gradients of a feature map as its weight, whereas XGrad-CAM computes the element-wise product of the feature map and its gradients, and then takes the sum as the weight.
Relevance-CAM~\cite{ref_relecam_origin} uses the sum of the relevance map obtained through relevance propagation as the weight for each feature map.
It can visualize not only the last convolutional layer but also all convolutional layers, including the shallow ones.
HiRes-CAM~\cite{ref_hirescom_origin} is a CAM method that uses the Hadamard product of the feature map and its gradients.
HiRes-CAM is superior to Grad-CAM in terms of the level of detail in the focus regions.
Layer-CAM~\cite{ref_layercam_origin}, which uses a weighting similar to HiRes-CAM, aims to visualize shallower layers.
As shown in the ``Applied cases'' in Table \ref{tab_0}, the gradient-based CAMs introduced here have been applied to various application studies.

Note that in gradient-based methods, the generated CAM can sometimes change significantly with a slight change in the input.
One method to address this issue is to average the CAMs generated from multiple outputs obtained by randomly noised inputs.
This approach is called SmoothGrad~\cite{ref_smooth_grad}, and it leads to more stable results.
A representative method that incorporates this approach is Smooth Grad-CAM++~\cite{ref_smooth_grad_pp}.

\subsection{Gradient-free CAM} \label{sec2_b}
Next, we introduce gradient-free CAMs, such as Score-CAM~\cite{ref_scorecam_origin}, Ablation-CAM~\cite{ref_ablationcam_origin}, Shap-CAM~\cite{ref_shap_cam_ori}, Shapley-CAM~\cite{ref_shaplaycam_ori}, LIFT-CAM~\cite{ref_lift_cam_ori}, and Eigen-CAM~\cite{ref_eigen_cam_ori}.
Among these, Score-CAM~\cite{ref_scorecam_origin} is the most representative method.
The key step in generating Score-CAM is creating a mask image by overlaying a feature map onto the input image.
By inputting this mask image into the trained CNN, the network outputs a class score, which is then used as the weight for the corresponding feature map.
As shown in Table \ref{tab_0}, Score-CAM is the most commonly used method among gradient-free CAMs.
Another approach is Ablation-CAM~\cite{ref_ablationcam_origin}.
The key step is to compute the change in the estimated score by replacing feature map values with zero or their average.
The size of the change in score is used as the weight for the corresponding feature map.
Ablation-CAM is a simple and fast method.

Shap-CAM~\cite{ref_shap_cam_ori} and Shapley-CAM~\cite{ref_shaplaycam_ori} are methods that use the contribution of each pixel to the class prediction, based on Shapley values~\cite{ref_shap}, to determine the weight of each feature map.
Although these methods require a high computational cost, they can estimate contributions more accurately than gradient-based methods.
LIFT-CAM~\cite{ref_lift_cam_ori} is a method that employs an approximate approach to compute Shapley values.
These methods are technically interesting because they introduce game theory into CAM.

Additionally, there are CAM methods that summarize all feature maps.
For example, Eigen-CAM~\cite{ref_eigen_cam_ori} is generated by applying PCA to the vectorized feature maps.
By generating only the first principal component, we can identify the essential focus region.
Note that Eigen-CAM~\cite{ref_eigen_cam_ori} does not reflect the contribution to class prediction.
Although both Eigen-CAM and our proposed PCA-Grad-CAM use PCA, they are fundamentally different methods.
The main purpose of Eigen-CAM is to extract the principal components from all feature maps.
In contrast, the main purpose of PCA-Grad-CAM is to visualize the focus regions corresponding to each principal component in any dense layer.
That is, PCA-Grad-CAM generates Grad-CAMs for each dimension of the PCA-transformed feature vector in any dense layer.

The gradient-free CAMs introduced in this subsection have been applied in various studies, as shown in the ``Applied cases'' in Table \ref{tab_0}.
As an exception, application cases of Shapley value-based CAMs are rare.

\subsection{Positioning of this study}
The citation counts of the CAM methods shown in Table \ref{tab_0} indicate that gradient-based CAMs are more popular than gradient-free CAMs.
This tendency is clearly observed in game theory-based CAMs (Shap-CAM~\cite{ref_shap_cam_ori}, Shaplay-CAM~\cite{ref_shaplaycam_ori}, and LIFT-CAM~\cite{ref_lift_cam_ori}).
Although these methods are interesting, due to factors such as implementation difficulty and high computational cost, their applied cases may be limited.
In contrast, the most popular method is Grad-CAM, with total citation counts exceeding 30,000.
Therefore, we focus on extending Grad-CAM.

As previously mentioned in Section \ref{sec1_int}, Grad-CAM is a superior method for visualizing CNNs; however, it cannot be applied to PCA and SVM layers integrated into CNNs.
Moreover, the various CAMs shown in Table \ref{tab_0} also cannot be applied to PCA and SVM layers.
Additionally, as previously mentioned in Subsection \ref{sec2_b}, Eigen-CAM is also not applicable to these layers.
Therefore, by extending the theory proposed by Selvaraju et al.~\cite{grad_cam_origin}, we propose a novel Grad-CAM technique applicable to PCA and SVM layers integrated into CNNs.
By using our methods, PCA-Grad-CAM and SVM-Grad-CAM, the focus regions corresponding to each principal component and the SVM decision can be clearly visualized.
Therefore, we believe our study provides an academic contribution.

\section{Preliminary Knowledge} \label{sec3}
This section introduces the background knowledge needed to understand the proposed method. 
Basic mathematical notation is given in Appendix \ref{app_a}.

\subsection{Conventional CNN}
The CNN model architecture assumed in our study is shown at the top of Figure \ref{fig_ov_view}.
An input image is processed through several convolutional layers, resulting in $T$ feature maps $\bm{F}^1, \cdots, \bm{F}^T$.
Herein, let us denote the $t$-th feature map by $\bm{F}^t = \begin{bmatrix} f^t_{m, n} \end{bmatrix}\in \mathbb{R}^{M \times N}, t \in \mathbb{N}_{\le T}$, which is represented in matrix form.
Where $M, N$, and $T$ represent the vertical size, horizontal size, and total number of feature maps, respectively.
By a flatten layer in CNNs, the matrix-form feature map $\bm{F}^t$ is transformed into a vector-form 
\begin{align}
\bm{x}^t = 
\begin{bmatrix}
f^t_{1, 1} \ f^t_{1, 2} \ \cdots \ f^t_{M, N}
\end{bmatrix}^\top = \begin{bmatrix}
x_{1}^t \ x_{2}^t \ \cdots \ x_{MN}^t
\end{bmatrix}^\top \in \mathbb{R}^{MN}. \label{eq_xk}
\end{align}
By applying this transformation to $\forall t \in \mathbb{N}_{\le T}$, we concatenate the resulting vectors into
\begin{align}
\bm{x} = 
\begin{bmatrix}
\bm{x}^1 \\ \vdots \\ \bm{x}^T
\end{bmatrix} \in \mathbb{R}^{MNT}. \label{eq_xvar}
\end{align}
This is generally referred to as the CNN feature vector.
As show at the top of Figure \ref{fig_ov_view}, the CNN feature vector $\bm{x}$ is used for classification through $L$ fully connected (dense) layers, as follows:
\begin{align}
\bm{q}_1 &= f(\bm{W}_1\bm{x} + \bm{\beta}_1) \in \mathbb{R}^{D_1}, \nonumber \\
\bm{q}_2 &= f(\bm{W}_2\bm{q}_1 + \bm{\beta}_2) \in \mathbb{R}^{D_2}, \nonumber \\
 & \vdots  \nonumber \\
 \bm{q}_l &= f(\bm{W}_l\bm{q}_{l-1} + \bm{\beta}_l) \in \mathbb{R}^{D_l},  \nonumber \\
 & \vdots \nonumber \\
 \bm{q}_{L-1} &= f(\bm{W}_{L-1}\bm{q}_{L-2} + \bm{\beta}_{L-1}) \in \mathbb{R}^{D_{L-1}}, \nonumber \\
 \bm{q}_{L} &= f(\bm{W}_{L}\bm{q}_{L-1} + \bm{\beta}_{L})  \in \mathbb{R}^{D_L}, \nonumber \\
 \bm{y} & = \bm{q}_{L}  \in \mathbb{R}^{C}, \ C=D_L, \label{eq_yql}
\end{align}
where, $\bm{W}_l \in \mathbb{R}^{D_l \times D_{l-1}}$ and $\bm{\beta}_{l} \in \mathbb{R}^{D_l}, D_0 = MNT$.
$\bm{W}_l$ and $\bm{\beta}_l$ are the parameters for the affine transformation of the $l$-th dense layer, and $f$ represents the activation function.
The output vector is $\bm{y} = \begin{bmatrix} y_1 \cdots y_C \end{bmatrix}^\top$, and the predicted class is given by $\mathrm{argmax}_c\ y_c$.

\subsection{PCA layer}
In out study, we assume that PCA is applied to the $l$-th dense layer, $\bm{q}_l = f(\bm{W}_{l}\bm{q}_{l-1} + \bm{\beta}_{l})$.
As shown in ``PCA transform'' in Figure \ref{fig_ov_view}, when we denote PCA function by $p(\cdot)$, the PCA feature vector can be represented as $\bm{p} = p(\bm{q}_l)$.
In general, PCA is used for dimensionality reduction.
Therefore, when we denote the dimension of the PCA feature vector by $B$, the relationship $B \le D_l$ holds.
That is, the PCA feature vector can be represented as $\bm{p} = \begin{bmatrix} p_1 \ p_2 \ \cdots \ p_B \end{bmatrix}^\top$.

Next, we explain the details of the PCA function $p(\cdot)$.
To obtain this function, the covariance matrix composed of observation samples is required.
Herein, we assume that there are $R$ input images, and denote the vector $\bm{q}_l$ generated from the $r$-th input image by $\bm{q}_l^r$.
In this case, the observation data matrix $\bm{Q}$, its zero-mean version $\bm{Q} - \bm{U}$, the unbiased covariance matrix $\bm{\Sigma}$, and the eigenvalue diagonal matrix $\bm{\Lambda}$ can be represented by
\begin{align}
\bm{Q} &= 
\begin{bmatrix}
\bm{q}_l^1 &
\cdots &
\bm{q}_l^R
\end{bmatrix} \in \mathbb{R}^{D_l \times R}, \nonumber \\
\bm{Q} - \bm{U}&= 
\begin{bmatrix}
\bm{q}_l^1 - \bm{u} 
& \cdots &
\bm{q}_l^R  - \bm{u}
\end{bmatrix} \in \mathbb{R}^{D_l \times R}, \nonumber \\
\bm{\Sigma} &= (\bm{Q} - \bm{U}) (\bm{Q} - \bm{U})^\top /(R-1) \in \mathbb{R}^{D_l \times D_l}, \nonumber \\
\bm{\Lambda} &= \bm{V}^\top \bm{\Sigma} \bm{V} \in \mathbb{R}^{B \times B}, \label{eq_21eig}
\end{align}
where, $\bm{u} = \begin{bmatrix}u_1 \ \cdots \ u_{D_l}\end{bmatrix}^\top$.
$u_d$ is the average value of the $d$-th dimension of the vector $\bm{q}_l$.
$B$ is the number of dimensions after dimensionality reduction by PCA.
Additionally, $\bm{\Lambda} = \mathrm{diag}([\lambda_1 \cdots \lambda_B]^\top)$, where $\lambda_1, \cdots, \lambda_B$ are the eigenvalues of $\bm{\Sigma}$ sorted in descending order.
Let us denote the eigenvectors corresponding to these eigenvalues by $\bm{v}_1, \cdots, \bm{v}_B$.
The matrix $\bm{V} = \begin{bmatrix}\bm{v}_1 \ \cdots \ \bm{v}_B\end{bmatrix}  \in \mathbb{R}^{D_l \times B}$ is formed by these eigenvectors.
In this case, the process of reducing the dimension of the observation matrix $\bm{Q}$ to $B$ dimensions is expressed as $\bm{V}^\top (\bm{Q} - \bm{U})$.
That is, the dimensionality reduction of the vector $\bm{q}_l$ generated by an input image and the CNN is expressed as
\begin{align}
\bm{p} = p(\bm{q}_l) = \bm{V}^\top (\bm{q}_l - \bm{u}), \ p: \mathbb{R}^{D_l} \mapsto \mathbb{R}^{B}, \ B \leq D_l. \label{eq_22pca}
\end{align}

\begin{table*}[t]
\tabcolsep = 1.0pt
  \caption{Closed-form Jacobians for PCA- and SVM-Grad-CAMs, one of our key contributions. Proof $\rightarrow$ Equations \ref{eq_pca_jacc_1}, \ref{eq_svm_jacc_ori}, \ref{eq_qlql1}, \ref{eq_p_q1xt}, \ref{eq_pca_jac_2}, \ref{eq_svm_abstk}, \ref{eq_29}, and \ref{eq_rbf_jacc}.}
  \label{tab_full}
  \centering
  \noindent\makebox[\textwidth]{%
  \begin{tabular}{ccl}
    \bhline{1.0pt}
    Act & Kernel & \multicolumn{1}{c}{Closed-form (PCA-Grad-CAM)}\\  \hline
    Sigm*&-&{\scriptsize$\displaystyle
\frac{\partial  \bm{p}}{\partial  \bm{x}^t} = 
\bm{V}^\top
\left(
\prod_{j=1}^{l-1}
\mathrm{diag} \Bigl( (\bm{1} - f(\bm{\delta}_{l-j+1})) \odot f(\bm{\delta}_{l-j+1}) \Bigr)\bm{W}_{l-j+1}
\right)
\mathrm{diag} \Bigl( (\bm{1} - f(\bm{\delta}_1)) \odot f(\bm{\delta}_1) \Bigr)\bm{W}^t_1
$}\\
    Relu&-&{\scriptsize$\displaystyle
\frac{\partial  \bm{p}}{\partial  \bm{x}^t} = 
\bm{V}^\top
\left(
\prod_{j=1}^{l-1}
\mathrm{diag} ( \bm{1}_{\bm{\delta}_{l-j+1}} )\bm{W}_{l-j+1}
\right)
\mathrm{diag} ( \bm{1}_{\bm{\delta}_1} )\bm{W}^t_1$}\\
\bhline{1.0pt}
    Act & Kernel & \multicolumn{1}{c}{Closed-form (SVM-Grad-CAM)}\\  \hline
    Sigm*&Lin&{\scriptsize$\displaystyle
\frac{\partial  a(\bm{p})}{\partial  \bm{x}^t} = 
\left(
\sum_{i=1}^{I} \alpha_i c_i \bm{p}_i^\top 
\right)
\bm{V}^\top
\left(
\prod_{j=1}^{l-1}
\mathrm{diag} \Bigl( (\bm{1} - f(\bm{\delta}_{l-j+1})) \odot f(\bm{\delta}_{l-j+1}) \Bigr)\bm{W}_{l-j+1}
\right)
\mathrm{diag} \Bigl( (\bm{1} - f(\bm{\delta}_1)) \odot f(\bm{\delta}_1) \Bigr)\bm{W}^t_1
$}\\
    Sigm*&RBF&{\scriptsize$\displaystyle
\frac{\partial  a(\bm{p})}{\partial  \bm{x}^t} = 
2 \gamma \left(
\sum_{i=1}^{I} \alpha_i c_i \exp(-\gamma \| \bm{p}_i - \bm{p}\|_2^2) (\bm{p}_i - \bm{p})^\top 
\right)
\bm{V}^\top
\left(
\prod_{j=1}^{l-1}
\mathrm{diag} \Bigl( (\bm{1} - f(\bm{\delta}_{l-j+1})) \odot f(\bm{\delta}_{l-j+1}) \Bigr)\bm{W}_{l-j+1}
\right)
\mathrm{diag} \Bigl( (\bm{1} - f(\bm{\delta}_1)) \odot f(\bm{\delta}_1) \Bigr)\bm{W}^t_1
$}\\
    Relu&Lin&{\scriptsize$\displaystyle
\frac{\partial  a(\bm{p})}{\partial  \bm{x}^t} = 
\left(
\sum_{i=1}^{I} \alpha_i c_i \bm{p}_i^\top 
\right)
\bm{V}^\top
\left(
\prod_{j=1}^{l-1}
\mathrm{diag} ( \bm{1}_{\bm{\delta}_{l-j+1}} )\bm{W}_{l-j+1}
\right)
\mathrm{diag} ( \bm{1}_{\bm{\delta}_1} )\bm{W}^t_1
$}\\
    Relu&RBF&{\scriptsize$\displaystyle
\frac{\partial  a(\bm{p})}{\partial  \bm{x}^t} = 
2 \gamma \left(
\sum_{i=1}^{I} \alpha_i c_i \exp(-\gamma \| \bm{p}_i - \bm{p}\|_2^2) (\bm{p}_i - \bm{p})^\top 
\right)
\bm{V}^\top
\left(
\prod_{j=1}^{l-1}
\mathrm{diag} ( \bm{1}_{\bm{\delta}_{l-j+1}} )\bm{W}_{l-j+1}
\right)
\mathrm{diag} ( \bm{1}_{\bm{\delta}_1} )\bm{W}^t_1$}
    \\ \bhline{1.0pt}
\multicolumn{3}{r}{Act: CNN activation function, Kernel: SVM kernel function, Sigm: sigmoid function, Lin: linear kernel, RBF: RBF kernel.}\\
\multicolumn{3}{r}{*: $f(z) = (1+\exp(-z))^{-1}$.}
  \end{tabular}
  }
\end{table*}

\subsection{SVM layer}
As shown in ``SVM Classification'' in Figure \ref{fig_ov_view}, we assume the use of an SVM taking the PCA feature vector $\bm{p}$ as input.

The SVM predicted class $s(\bm{p})$ is defined as
\begin{align}
s(\bm{p}) = \mathrm{sign}(a(\bm{p})), \ a(\bm{p}) = \sum_{i=1}^{I} \alpha_i c_i K(\bm{p}_i, \bm{p}) + g, \label{eq_svmsign}
\end{align}
where $\mathrm{sign}(\cdot)$ is a function that returns $1$ if the input value is greater than $0$, and returns $-1$ if the input value is less than $0$.
Moreover, $g$ is the bias, $I$ is the number of support vectors, $i$ is the index of a support vector, $\bm{p}_i$ is the $i$-th support vector, $c_i$ is the $i$-th actual class, $\alpha_i$ is the Lagrange multiplier corresponding to the $i$-th support vector.
Additionally, $K(\bm{p}_i, \bm{p})$ denotes the kernel function.
In this study, we use the linear and RBF kernels, which are expressed as
\begin{align}
&K(\bm{p}_i, \bm{p}) =\begin{cases} 
\bm{p}_i^\top  \bm{p}, & \text{``Linear kernel''}\\
\exp(-\gamma \| \bm{p}_i - \bm{p}\|_2^2), & \text{``RBF kernel''}
\end{cases}, \nonumber \\
&\| \bm{p}_i - \bm{p}\|_2^2 = (\bm{p}_i - \bm{p})^\top (\bm{p}_i - \bm{p}), \ \gamma > 0. \label{eq_kernel_two}
\end{align}

\subsection{Conventional-Grad-CAM}
Next, we describe ``Conventional-Grad-CAM'' as shown in Figure \ref{fig_ov_view}.
This is a popular CAM method proposed by Selvaraju et al.~\cite{grad_cam_origin}.
Using a block matrix representation, the Grad-CAM for class $c$ is defined as
\begin{align}
\bm{G}_c &= \mathrm{Relu} \left( \bm{a}_c^\top \bm{F} \right) \in [0, \infty)^{M \times N}, \nonumber \\
\bm{a}_c &= \begin{bmatrix}a_c^1 \\ \vdots \\ a_c^T\end{bmatrix} \in \mathbb{R}^T, \ \bm{F} =\begin{bmatrix}\bm{F}^1 \\ \vdots \\ \bm{F}^T\end{bmatrix} \in \mathbb{R}^{MT \times N}, \label{eq_common_gc}
\end{align}
where $a_c^t \bm{F}^t$ appearing in $\bm{a}_c^\top \bm{F}$ denotes the scalar multiplication of the matrix $\bm{F}^t$ by $a_c^t$.
Herein, $a_c^t$ represents the weight of the $t$-th feature map $\bm{F}^t$ for class $c$, and is defined as
\begin{align}
a^t_c = \sum_{m=1}^{M} \sum_{n=1}^{N}  \frac{\partial y_c}{\partial f^t_{m,n}} = \sum_{i=1}^{MN} \frac{\partial y_c}{\partial x_i^t}. \label{eq_akc}
\end{align}
Note that Equation~\ref{eq_xk} is used for this transformation.
As shown in Table \ref{tab_0}, the conventional Grad-CAM described here has been applied in various studies.

\section{Proposed method}\label{sec4}
In this section, we explain the details of PCA- and SVM-Grad-CAM by extending the existing Grad-CAM method described in Section \ref{sec3}.
Subsections \ref{sec4pca} and \ref{sec4svm} correspond to PCA-Grad-CAM and SVM-Grad-CAM, respectively.

\subsection{PCA-Grad-CAM}\label{sec4pca}
The plus-minus sign of the partial derivative $\partial y_c / \partial x_i^t$ appearing in Equation~\ref{eq_akc}, which is required for conventional Grad-CAM, indicates the direction in which the estimated class score changes due to the local effect of a feature map pixel.
Specifically, a positive sign indicates that the pixel has a positive effect on the estimated class, whereas a negative sign indicates a negative effect.
The objective of conventional Grad-CAM is to clearly highlight the focus region when a CNN predicts a class.
Therefore, only the positive effects are visualized using the ReLU function, as shown in Equation~\ref{eq_common_gc}.
In contrast, both the positive and negative signs of a PCA feature vector are important.
Therefore, visualizing only the positive focus regions is insufficient for PCA-Grad-CAM; that is, visualizing the negative focus regions is also important.
For this reason, PCA-Grad-CAM should include both positive and negative focus regions.

Specifically, we define the PCA-Grad-CAM of the $b$-th principal component as
\begin{align}
\bm{P}_b &= \bm{e}_b^\top \bm{F} \in (-\infty, \infty)^{M \times N}, \nonumber \\
\bm{e}_b &= \begin{bmatrix}e_b^1 &\cdots& e_b^T\end{bmatrix}^\top \in \mathbb{R}^T, \ e^t_b = \sum_{i=1}^{MN} \frac{\partial p_b}{\partial x_i^t}, \label{eq_pca_grad_cam}
\end{align}
where $e^t_b$ indicates the weight of the $t$-th feature map $\bm{F}^t$ for the $b$-th principal component.
When regions in $\bm{P}_b$ have positive values, the corresponding pixels have a positive effect on the $b$-th principal component, whereas regions with negative values have a negative effect.
If a heat map includes both positive and negative effects, understanding the focus regions of a CNN becomes difficult.
Therefore, we split the PCA-Grad-CAM $\bm{P}_b$ into two separate heat maps representing the positive and negative effects.
Specifically, $\bm{P}_b$ is split into 
\begin{align}
\bm{P}_b^\oplus =  \mathrm{Relu}(\bm{P}_b), \ \bm{P}_b^\ominus = \mathrm{Relu}(-\bm{P}_b). \label{eq_pca_pm}
\end{align}
$\bm{P}_b^{\oplus}$ visualizes the regions having a positive gradient effect on the $b$-th principal component $p_b$, whereas $\bm{P}_b^{\ominus}$ visualizes the regions having a negative gradient effect on the same component.
Since it is preferable to compare the heat maps of $\bm{P}_b^{\oplus}$ and $\bm{P}_b^{\ominus}$, we adopt
\begin{align}
\nu_b = \max_{\omega \in \{\oplus, \ominus\}} \ \left\{ \max_{m \in \mathbb{N}_{\le M}, n \in \mathbb{N}_{\le N}} (\bm{P}_b^\omega)_{m, n} \right\} \label{eq_nu}
\end{align}
as the maximum value of the color bar when visualizing these heat maps.
$\nu_b$ is defined as the larger of the maximum values of $\bm{P}_b^{\oplus}$ and $\bm{P}_b^{\ominus}$.

Figure~\ref{fig_pca0} shows the PCA-Grad-CAMs obtained from a CNN trained on the CIFAR-10 dataset~\cite{ref_cifar10}.
The details of this model are described in Section~\ref{sec_vi}.
$\bm{P}_1^{\oplus}$ shown in Figure~\ref{fig_pca0}(a) is the PCA-Grad-CAM representing the positive gradient effect on the first principal component, indicating that this component slightly focuses on an airplane.
In contrast, $\bm{P}_1^{\ominus}$ shown in the same figure represents the negative gradient effect on the first principal component, indicating that this component strongly focuses on the sky.
As in the comparison shown here, using $\nu_b$ as the maximum of the color bar allows comparison of $\bm{P}_b^{\oplus}$ and $\bm{P}_b^{\ominus}$.
In summary, $p_1$, the first principal component of the PCA feature vector, is decreased by the sky region.
Next, we focus on the second principal component, as shown in Figure~\ref{fig_pca0} (a).
$\bm{P}_2^{\oplus}$ indicates that the airplane regions are strongly activated.
In contrast, $\bm{P}_2^{\ominus}$ has almost no activated regions.
Therefore, the second principal component $p_2$ of the PCA feature vector can be interpreted as being increased by the airplane regions.

The above-mentioned tendencies of $\bm{P}_1^{\oplus}$, $\bm{P}_1^{\ominus}$, $\bm{P}_2^{\oplus}$, and $\bm{P}_2^{\ominus}$ are almost the same not only in Figure~\ref{fig_pca0} (a) but also in the cases of (b) and (c).
As illustrated in the aforementioned example, by examining the PCA-Grad-CAMs $\bm{P}_b^{\oplus}$ and $\bm{P}_b^{\ominus}$, we can identify the focus regions of each component of the PCA feature vector.
It should be noted that Grad-CAM uses the partial derivatives as the weights of the feature maps.
Therefore, even if $\bm{P}_b^{\oplus}$ is strongly activated, $p_b$ is not always a positive value.
Similarly, even if $\bm{P}_b^{\ominus}$ is strongly activated, $p_b$ is not always a negative value.

As shown in Equation \ref{eq_pca_grad_cam}, $\partial p_b / \partial x_i^t$ is required for obtaining PCA-Grad-CAM.
This variable can be expressed in matrix form as
\begin{align}
\frac{\partial  \bm{p}}{\partial  \bm{x}^t} = 
\begin{bmatrix}
\frac{\partial p_b}{\partial x_i^t}
\end{bmatrix} \in \mathbb{R}^{B \times MN}, \ b \in \mathbb{N}_{\le B}, \ i \in \mathbb{N}_{\le MN}, \nonumber
\end{align}
and it is called the Jacobian.
By using this expression and Equations \ref{eq_xvar}, \ref{eq_yql}, and \ref{eq_22pca}, we obtain 
\begin{align}
\frac{\partial  \bm{p}}{\partial  \bm{x}^t} = 
\overbrace{
\frac{\partial  \bm{p}}{\partial  \bm{q}_{l}}}^{\text{PCA}}
\underbrace{
\frac{\partial  \bm{q}_{l}}{\partial  \bm{q}_{l-1}}
\cdots 
\frac{\partial  \bm{q}_{2}}{\partial  \bm{q}_{1}}
}_{\text{Dense}}
\overbrace{
\frac{\partial  \bm{q}_{1}}{\partial  \bm{x}^t}}^{\text{Flat}}, \ l < L. \label{eq_pca_jacc_1}
\end{align}
The meanings of the left-hand side and the right-hand side of this equation are shown in Figure~\ref{fig_pca_m1}.
This figure assumes that a PCA layer is implemented in the $l$-th dense layer ($l = 2$) of a CNN, as shown in Figure~\ref{fig_ov_view} and Equation \ref{eq_yql}.
The terms $\partial  \bm{p}/ \partial  \bm{q}_{2}$, $\bm{q}_{2} / \partial  \bm{q}_{1}$, and $\partial  \bm{q}_{1}/ \partial  \bm{x}^t$ appearing on the right-hand side of Equation~\ref{eq_pca_jacc_1} represent the Jacobians of directly connected dense layers.
On the other hand, $\partial  \bm{p} / \partial  \bm{x}^t$, the left-hand side of Equation~\ref{eq_pca_jacc_1}, represents the Jacobian expanded using the gradient chain rule.
As shown in the differential graph of $\partial  \bm{p} / \partial  \bm{x}^t$ in Figure~\ref{fig_pca_m1}, we can clearly see the gradient effect of each pixel in the $t$-th feature map on the PCA feature vector $\bm{p}$.
By summing over all paths from $\bm{x}^t$ to $p_b$, the weight of the PCA-Grad-CAM, $e^t_b$, is defined as shown in Equation~\ref{eq_pca_grad_cam}.
The closed form of $\partial \bm{p} / \partial \bm{x}^t$ is shown in Table~\ref{tab_full}.
The formulas depend on the activation function used in the CNN.
We provide solutions for the Sigmoid and Relu activation functions.
The exact proof is presented in Subsection~\ref{sec_jaccobi}.

\begin{figure}[t]
    \begin{center}
        \includegraphics[scale=0.34]{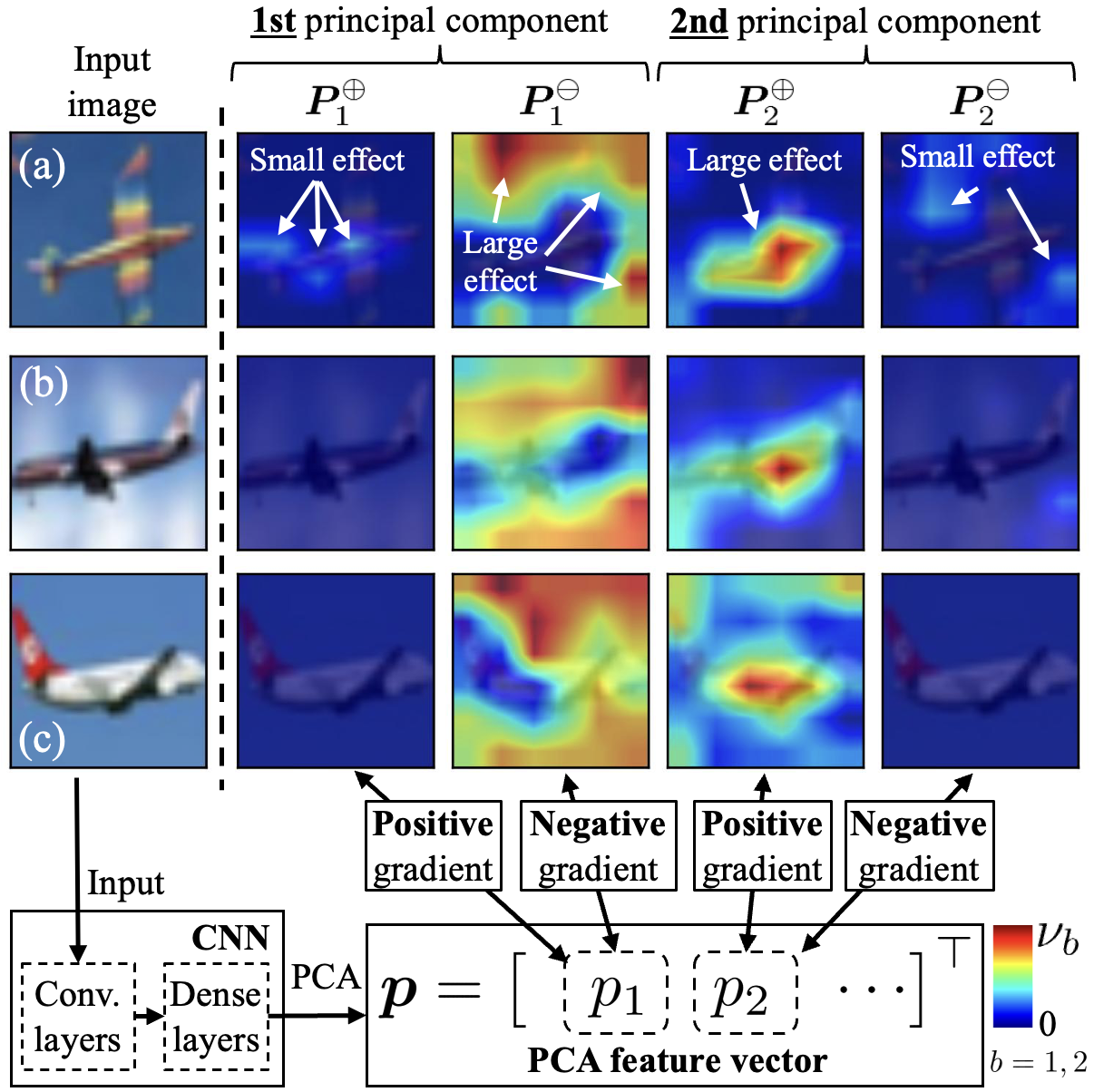}
        \caption{PCA-Grad-CAM visualizations generated from a CNN trained on the CIFAR-10 dataset, as described in Section \ref{sec_vi}.}
        \label{fig_pca0}
    \end{center}
\end{figure}

\subsection{SVM-Grad-CAM}\label{sec4svm}
Here, we explain a novel method, ``SVM-Grad-CAM,'' as illustrated in Figure~\ref{fig_ov_view}.
Referring to the conventional Grad-CAM defined in Equations \ref{eq_common_gc} and \ref{eq_akc}, we propose SVM-Grad-CAM as 
\begin{align}
\bm{S} &= \mathrm{Relu} \left( \bm{s}^\top \bm{F} \right) \in [0, \infty)^{M \times N}, \nonumber \\
\bm{s} &= \begin{bmatrix}s^1 &\cdots& s^T\end{bmatrix}^\top \in \mathbb{R}^T, \ s^t = \sum_{i=1}^{MN} \frac{\partial a(\bm{p})}{\partial x_i^t}. \label{eq_svm_grad_cam}
\end{align}
The gradients required to compute $s^t$ can be summarized as
\begin{align}
\frac{\partial  a(\bm{p})}{\partial  \bm{x}^t} = 
\begin{bmatrix}
\frac{\partial a(\bm{p})}{\partial x_i^t}
\end{bmatrix} \in \mathbb{R}^{1 \times MN}, \ i \in \mathbb{N}_{\le MN}. \nonumber 
\end{align}
By using this expression and Equations \ref{eq_xvar}, \ref{eq_yql}, \ref{eq_22pca}, and \ref{eq_svmsign}, we can obtain
\begin{align}
\frac{\partial  a(\bm{p})}{\partial  \bm{x}^t} = 
\underbrace{
\frac{\partial a(\bm{p})}{\partial \bm{p}}}_{\text{SVM}}
\overbrace{
\frac{\partial  \bm{p}}{\partial  \bm{q}_{l}}}^{\text{PCA}}
\underbrace{
\frac{\partial  \bm{q}_{l}}{\partial  \bm{q}_{l-1}}
\cdots 
\frac{\partial  \bm{q}_{2}}{\partial  \bm{q}_{1}}
}_{\text{Dense}}
\overbrace{
\frac{\partial  \bm{q}_{1}}{\partial  \bm{x}^t}}^{\text{Flat}}, \ l < L. \label{eq_svm_jacc_ori}
\end{align}
This closed-form depends on the CNN activation function (Sigmoid or ReLU) and the SVM kernel function (Linear or RBF), and it is shown in Table \ref{tab_full}.
The exact proof is provided in Subsection \ref{sec_jaccobi}.

\begin{figure}[t]
    \begin{center}
        \includegraphics[scale=0.32]{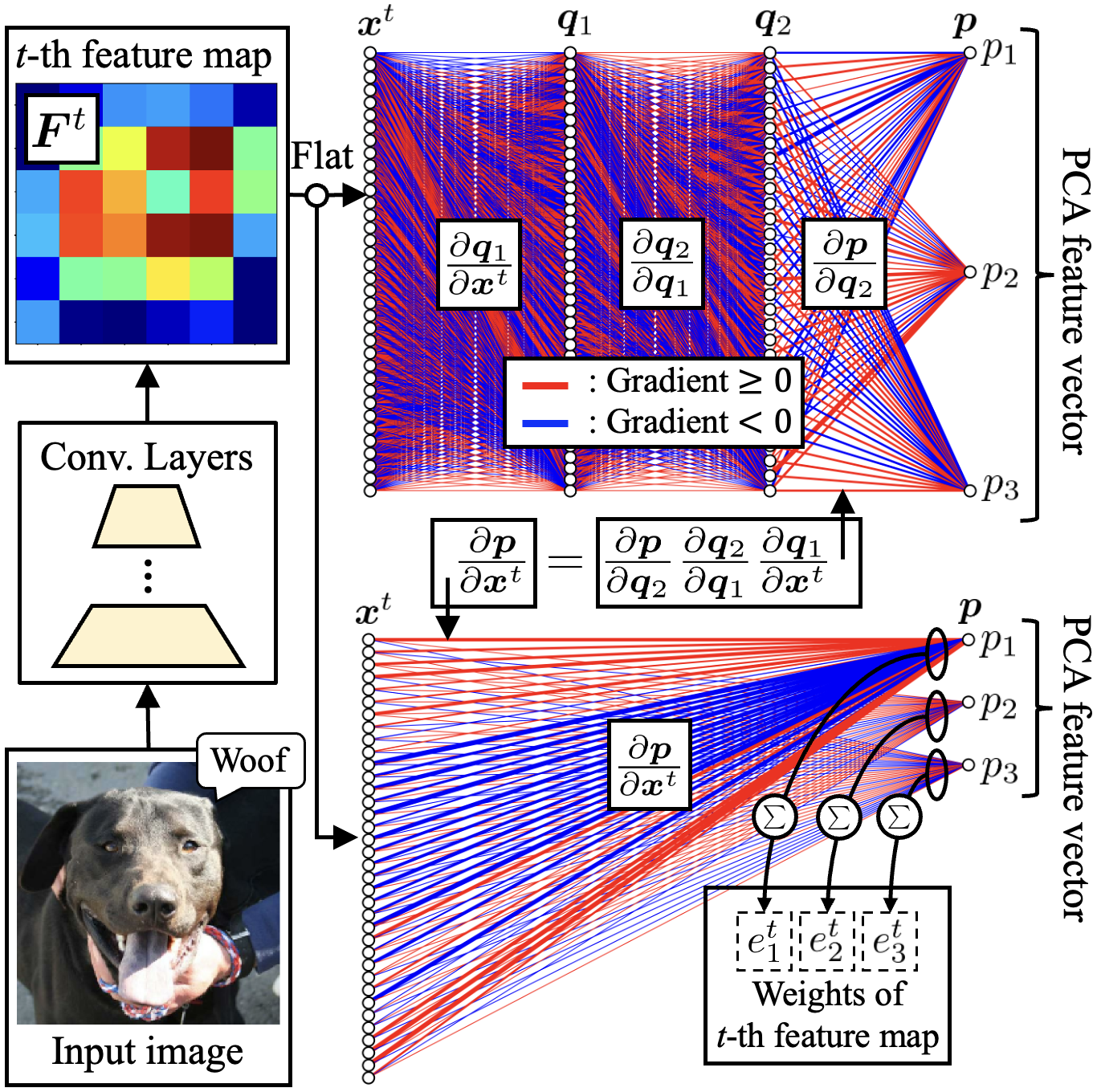}
        \caption{
Visualized Jacobian computed from a CNN trained on the Dog vs. Cat dataset, as described in Section~\ref{sec_vi}.
The red and blue paths represent positive and negative gradients, respectively.
The width of each path represents the absolute value of the gradient.
}
        \label{fig_pca_m1}
    \end{center}
\end{figure}

\subsection{Closed-form Jacobians} \label{sec_jaccobi}
To obtain PCA- and SVM-Grad-CAM, the closed-forms of $\partial  \bm{q}_{l} / \partial  \bm{q}_{l-1}$, $\partial  \bm{q}_{1} / \partial  \bm{x}^t$, $\partial  \bm{p} / \partial  \bm{q}_{l}$, and $\partial  a(\bm{p}) / \partial  \bm{x}^t$ appearing in Equations \ref{eq_pca_jacc_1} and \ref{eq_svm_jacc_ori} are required.
These are shown in Table~\ref{tab_full}.
In this subsection, we present the exact proofs of these formulas.

\subsubsection{Jacobian from the dense layer to the next dense layer}
When we represent an affine transform $\bm{W}_{l}\bm{q}_{l-1} + \bm{\beta}_l$ as 
\begin{align}
\bm{\delta}_l = \bm{W}_l \bm{q}_{l-1} + \bm{\beta}_l, \ l > 1, \label{eq_delta_l}
\end{align}
we can represent $\bm{q}_l = f(\bm{\delta}_l)$ as the transformation of the $l$-th dense layer, as shown in Figure \ref{fig_ov_view} and Equation \ref{eq_yql}.
Then, the Jacobian $\partial  \bm{q}_l/\partial  \bm{q}_{l-1}$ can be expressed as
\begin{align}
\frac{\partial  \bm{q}_l}{\partial  \bm{q}_{l-1}} &= 
\frac{\partial  f(\bm{\delta}_l)}{\partial  \bm{\delta}_l}
\frac{\partial  \bm{\delta}_l }{\partial  \bm{q}_{l-1}} = \frac{\partial  f(\bm{\delta}_l)}{\partial  \bm{\delta}_l} \bm{W}_l \in \mathbb{R}^{D_l \times D_{l-1}}. \label{eq_qqjac}
\end{align}
That is, the Jacobian depends on the activation function $f$.
When we represent $i$-th dimension of $\bm{\delta}_l$ as $(\bm{\delta}_l)_i = \delta_{li}$ and adopt the Sigmoid function as the activation function, we obtain
\begin{align}
&f(z) = (1+\exp(-z))^{-1} \Rightarrow \nonumber \\ 
&\left( \frac{\partial  f(\bm{\delta}_l)}{\partial \bm{\delta}_l} \right)_{ij} =
 \frac{\partial  (1+\exp(-\delta_{li}))^{-1}}{\partial  \delta_{lj}} \nonumber \\ &
 =\begin{cases}
 (1-f(\delta_{li}))f(\delta_{li}), & i=j\\
 0, & i \neq j
 \end{cases} \nonumber \\
&  \therefore \ 
 \frac{\partial  f(\bm{\delta}_l)}{\partial \bm{\delta}_l} =\mathrm{diag} \Bigl( (\bm{1} - f(\bm{\delta}_l)) \odot f(\bm{\delta}_l) \Bigr) \in \mathbb{R}^{{D_l} \times {D_l}}. \label{eq_sigm_dif}
\end{align}
Here, $\odot$ denotes Hadamard product, and $\mathrm{diag}(\bm{z})$ denotes the diagonal matrix of the vector $\bm{z}$.
When we adopt the Relu function as the activation, we obtain
\begin{align}
&f(z) = \mathrm{Relu}(z) \Rightarrow \nonumber \\ 
&\left( \frac{\partial  f(\bm{\delta}_l)}{\partial \bm{\delta}_l} \right)_{ij} =
 \frac{\partial \mathrm{Relu}(\delta_{li})}{\partial  \delta_{lj}}
 =\begin{cases}
 1,& i=j \land \delta_{li} > 0\\
 0,& \text{otherwise}
  \end{cases} \nonumber \\
&  \therefore \ 
 \frac{\partial  f(\bm{\delta}_l)}{\partial \bm{\delta}_l} =\mathrm{diag}( 
\bm{1}_{\bm{\delta}_l}) \in \{0, 1\}^{D_l \times D_l}, \ \bm{1}_{\bm{\delta}_l} \in \{0, 1\}^{D_l},
\label{eq_relu_dif}
\end{align}
where 
\begin{align}
(\bm{1}_{\bm{\delta}_l})_i
 = \begin{cases}
 1,& \delta_{li} > 0\\
 0,& \text{otherwise}
  \end{cases}. \nonumber
\end{align}
Therefore, by using Equations \ref{eq_qqjac}, \ref{eq_sigm_dif}, and \ref{eq_relu_dif}, the closed-form Jacobian can be expressed as
\begin{align}
\frac{\partial  \bm{q}_l}{\partial  \bm{q}_{l-1}} = \begin{cases}
\mathrm{diag} \Bigl( (\bm{1} - f(\bm{\delta}_l)) \odot f(\bm{\delta}_l) \Bigr)\bm{W}_l, \ f: \text{``Sigmoid''} \\
\mathrm{diag} ( \bm{1}_{\bm{\delta}_l} )\bm{W}_l, \ f: \text{``Relu''}
\end{cases}.
 \label{eq_qlql1}
\end{align}

\subsubsection{Jacobian from the flatten layer to the dense layer}
Here, we describe the Jacobian $\partial  \bm{q}_{1}/\partial  \bm{x}^t$.
The output $\bm{q}_1 \in \mathbb{R}^{D_1}$ with respect to the input $\bm{x}$ is computed as $\bm{q}_1 = f(\bm{W}_1\bm{x} + \bm{\beta}_1)$.
Similar to Equation \ref{eq_delta_l}, we adopt
\begin{align}
\bm{\delta}_1 = \bm{W}_1\bm{x} + \bm{\beta}_1 \in \mathbb{R}^{D_1}. \label{eq_delta_1}
\end{align}
Since $\bm{x}^t$ is used in $\partial  \bm{q}_{1}/\partial  \bm{x}^t$ instead of $\bm{x}$, unlike in Equation \ref{eq_delta_l}, we express the weight matrix $\bm{W}_1$ in terms of 
\begin{align}
\bm{W}_1 = \begin{bmatrix}\bm{W}_1^1 & \cdots & \bm{W}_1^T \end{bmatrix} \in \mathbb{R}^{D_1 \times MNT}, \ \bm{W}_1^t \in \mathbb{R}^{D_1 \times MN}. \nonumber 
\end{align}
By adopting this expression, $\bm{W}_1^t$ corresponds to $\bm{x}^t$ in Equations \ref{eq_xk} and \ref{eq_xvar}.
Specifically, $\bm{\delta}_1$ can be expressed as
\begin{align}
\bm{\delta}_1 = \sum_{t=1}^{T} \bm{W}_1^t \bm{x}^t + \bm{\beta}_1 \in \mathbb{R}^{D_1}. \nonumber 
\end{align}
Therefore, we obtain
\begin{align}
\frac{\partial \bm{q}_{1}}{\partial \bm{x}^t} = 
\frac{\partial f(\bm{\delta}_{1})}{\partial \bm{\delta}_{1}}
\frac{\partial \bm{\delta}_{1}}{\partial \bm{x}^t}=
\frac{\partial f(\bm{\delta}_{1})}{\partial \bm{\delta}_{1}} \bm{W}_1^t 
\in \mathbb{R}^{D_1 \times MN}. \nonumber 
\end{align}
And, by using Equations \ref{eq_sigm_dif} and \ref{eq_relu_dif} we obtain
\begin{align}
\frac{\partial  \bm{q}_1}{\partial  \bm{x}^t} = \begin{cases}
\mathrm{diag} \Bigl( (\bm{1} - f(\bm{\delta}_1)) \odot f(\bm{\delta}_1) \Bigr)\bm{W}^t_1, \ f: \text{``Sigmoid''} \\
\mathrm{diag} ( \bm{1}_{\bm{\delta}_1} )\bm{W}^t_1, \ f: \text{``Relu''}
\end{cases}.
 \label{eq_p_q1xt}
\end{align}

\subsubsection{On the PCA and SVM layers Jacobian}
From Equation \ref{eq_22pca}, the Jacobian of the PCA layer, $\partial  \bm{p}/\partial  \bm{q}_{l}$, is expressed by
\begin{align}
\frac{\partial  \bm{p}}{\partial  \bm{q}_l} = \frac{\partial  \bm{V}^\top (\bm{q}_l - \bm{u})}{\partial  \bm{q}_l} = \bm{V}^\top \in \mathbb{R}^{B \times D_l}. \label{eq_pca_jac_2}
\end{align}
In the case of the SVM layer, from Equation \ref{eq_svmsign}, $\partial  a(\bm{p})/\partial  \bm{p}$ can be expressed by
\begin{align}
\frac{\partial a(\bm{p})}{\partial \bm{p}} &= 
\sum_{i=1}^{I} \alpha_i c_i \frac{\partial  K(\bm{p}_i, \bm{p} )}{\partial \bm{p}} \in \mathbb{R}^{1 \times B}.\label{eq_svm_abstk} 
\end{align}
That is, the gradient of the kernel function $K(\bm{p}_i, \bm{p})$ is required.
From Equation \ref{eq_kernel_two}, when adopting the linear kernel, the gradient can be expressed by
\begin{align}
K(\bm{p}_i, \bm{p}) = \bm{p}_i^\top  \bm{p} \ \Rightarrow \ 
\frac{\partial  K(\bm{p}_i, \bm{p} )}{\partial \bm{p}} = \bm{p}_i^\top \in \mathbb{R}^{1 \times B}. \label{eq_29}
\end{align}
On the other hand, from Equation \ref{eq_kernel_two}, when adopting the RBF kernel, the gradient can be expressed as
\begin{align}
&K(\bm{p}_i, \bm{p}) = \exp(-\gamma \| \bm{p}_i - \bm{p}\|_2^2) \Rightarrow \nonumber \\
&\frac{\partial K(\bm{p}_i, \bm{p})}{\partial  \bm{p}} 
= \frac{\partial \exp(-\gamma \| \bm{p}_i - \bm{p}\|_2^2)}{\partial  (-\gamma \| \bm{p}_i - \bm{p}\|_2^2)}
\frac{\partial (-\gamma \| \bm{p}_i - \bm{p}\|_2^2)}{\partial \bm{p}} \nonumber \\
&= 2 \gamma \exp(-\gamma \| \bm{p}_i - \bm{p}\|_2^2) (\bm{p}_i - \bm{p})^\top  \in \mathbb{R}^{1 \times B},  
\label{eq_rbf_jacc}
\end{align}
where $\gamma$ is the hyperparameter of the RBF kernel and relates to the decision boundary of the classification.

\begin{figure}[t]
    \begin{center}
        \includegraphics[scale=0.7]{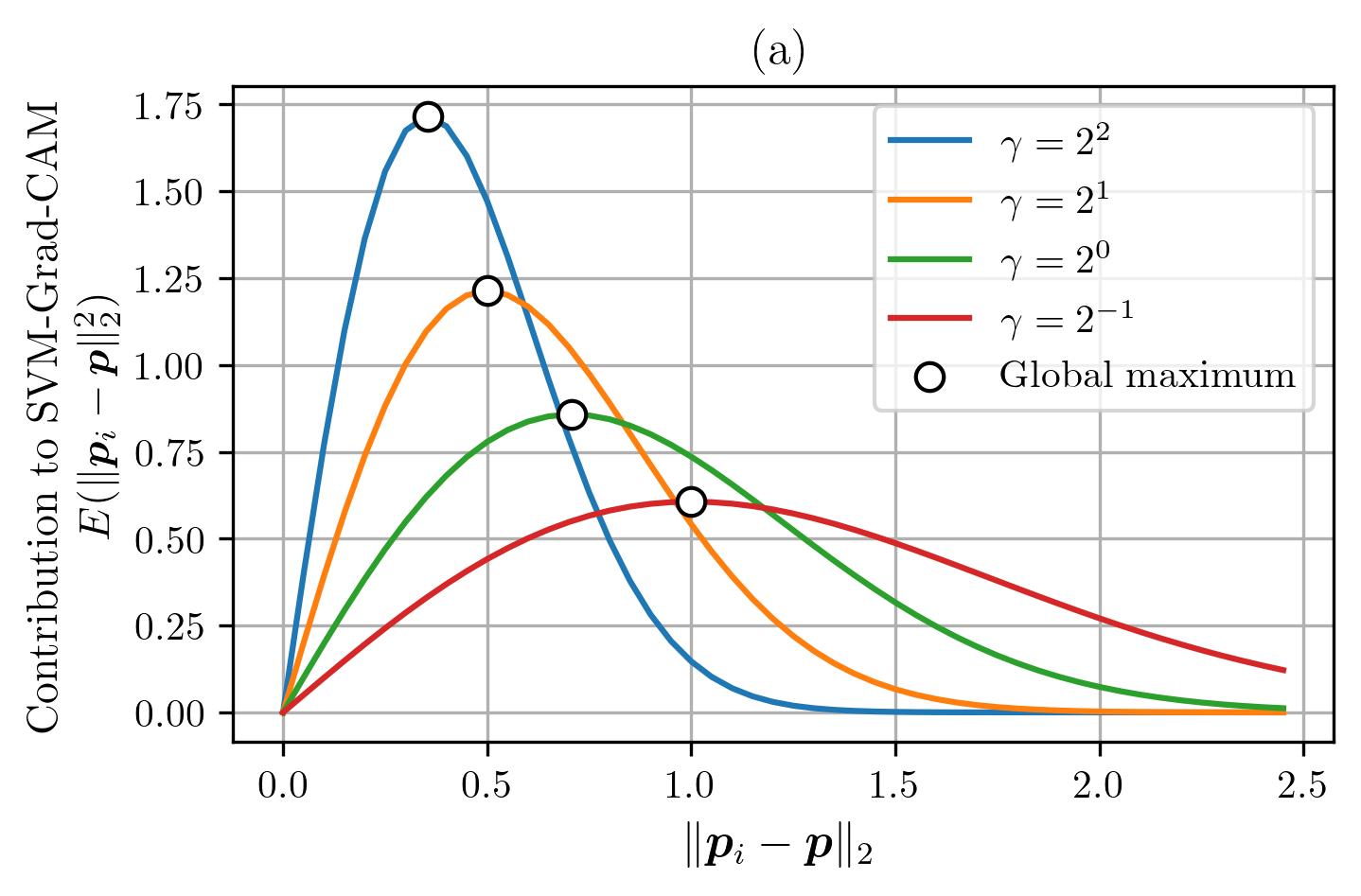}
        \caption{
Effect of the distance between the PCA feature vector $\bm{p}$ and the $i$-th support vector $\bm{p}_i$, $\| \bm{p}_i - \bm{p}\|_2$, on $E(\|\bm{p}_i - \bm{p}\|_2)$.
$E(\|\bm{p}_i - \bm{p}\|_2)$ represents the degree of effect on the SVM-Grad-CAM $\bm{S}$ and is defined in Equation \ref{eq_e_efs}.
From Theorem \ref{th_1}, the distance that maximizes $E(\|\bm{p}_i - \bm{p}\|_2)$ is $\|\bm{p}_i - \bm{p}\|_2 = (2\gamma)^{-1/2}$, and the $\bigcirc$ in the figure represents these locations.
}
        \label{fig_svm1}
    \end{center}
\end{figure}

From these closed-forms, we can understand the characteristics of the SVM-Grad-CAM.
From Equation \ref{eq_29}, when adopting the linear kernel, we can interpret that only the locations of the support vectors $\bm{p}_i$ ($\forall i \in \mathbb{N}_{\le I}$) affect $\partial K(\bm{p}_i, \bm{p})/\partial \bm{p}$.
On the other hand, from Equation \ref{eq_rbf_jacc}, when adopting the RBF kernel, both $\bm{p}$ and $\bm{p}_i$ influence $\partial K(\bm{p}_i, \bm{p})/\partial \bm{p}$.
As shown in Equations \ref{eq_svm_grad_cam}, \ref{eq_svm_jacc_ori}, and \ref{eq_svm_abstk}, $\partial K(\bm{p}_i, \bm{p})/\partial  \bm{p}$ directly affects the SVM-Grad-CAM $\bm{S}$.
Therefore, we define the degree of effect of the support vector $\bm{p}_i$ on SVM-Grad-CAM as the L2 norm of $\partial K(\bm{p}_i, \bm{p}) / \partial \bm{p}$, as defined in Equation \ref{eq_rbf_jacc}.
That is, 
\begin{align}
E(\bm{p}_i, \bm{p}) &= \left\| \partial K(\bm{p}_i, \bm{p}) / \partial  \bm{p} \right\|_2 \nonumber \\
&= 2 \gamma \exp(-\gamma \| \bm{p}_i - \bm{p}\|_2^2)  \| \bm{p}_i - \bm{p} \|_2. \label{eq_e_efs}
\end{align}
When $E$ has a large value, we interpret that the support vector $\bm{p}_i$ strongly affects the SVM-Grad-CAM $\bm{S}$.
Since $E(\bm{p}_i, \bm{p})$ can be regarded as a single-variable function of $\| \bm{p}_i - \bm{p} \|_2$, we express
\begin{align}
E(\| \bm{p}_i - \bm{p}\|_2)  = E(\bm{p}_i, \bm{p}).  \label{eq_euc_e}
\end{align}
This indicates that the distance between the support vector $\bm{p}_i$ and the observation PCA vector $\bm{p}$ affects the SVM-Grad-CAM $\bm{S}$.
The shape of $E(\bm{p}_i, \bm{p})$ is shown in Figure \ref{fig_svm1} and indicates that when the distance is too large, $E(\| \bm{p}_i - \bm{p}\|_2)$ approaches zero.
Moreover, if the distance approaches zero, $E(\| \bm{p}_i - \bm{p}\|_2)$ also approaches zero. 
This visual observation can be proven as the following theorem.
\begin{thm} \label{th_0}
\begin{align}
\lim_{\| \bm{p}_i - \bm{p}\|_2 \rightarrow +0} E(\| \bm{p}_i - \bm{p}\|_2) &= 0, \label{eq_th01}\\
\lim_{\| \bm{p}_i - \bm{p}\|_2 \rightarrow \infty} E(\| \bm{p}_i - \bm{p}\|_2) &= 0 \label{eq_th11}
\end{align}
\end{thm}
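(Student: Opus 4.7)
The plan is to substitute $r = \|\bm{p}_i - \bm{p}\|_2 \ge 0$ so that $E$ becomes a single-variable function $E(r) = 2\gamma r \exp(-\gamma r^2)$, and then evaluate the two one-sided limits separately using only elementary calculus.

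For the first limit, as $r \to 0^+$, I would simply invoke continuity of each factor: $\exp(-\gamma r^2) \to \exp(0) = 1$ and $r \to 0$, so the product tends to $2\gamma \cdot 0 \cdot 1 = 0$. Since $\gamma > 0$ is a fixed constant, no indeterminate form arises, and this step amounts to one line.

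For the second limit, as $r \to \infty$, the expression has the form $0 \cdot \infty$, so I would rewrite it as
\begin{align}
E(r) = \frac{2\gamma r}{\exp(\gamma r^2)}\nonumber
\end{align}
and show that the denominator dominates. The cleanest route is L'H\^opital's rule once on the $\infty/\infty$ form, giving $\lim_{r \to \infty} 2\gamma / (2\gamma r \exp(\gamma r^2)) = 0$; alternatively, the substitution $s = \gamma r^2$ reduces the claim to the standard fact that $\sqrt{s}\,e^{-s} \to 0$ as $s \to \infty$, which follows from the series bound $e^{s} \ge s^2/2$ for $s \ge 0$.

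I do not anticipate a real obstacle here: both limits are textbook consequences of exponential decay beating polynomial growth, and the only thing to be careful about is making sure the $r \to 0^+$ case is treated as a continuity statement (no indeterminate form) rather than being conflated with the $r \to \infty$ case. The proof should therefore consist of two short paragraphs, one per limit, with the $r \to \infty$ case being the only one that needs an actual argument beyond substitution.
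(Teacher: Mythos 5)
Your proposal is correct and follows essentially the same route as the paper: the $r \to 0^+$ limit is immediate (the paper simply calls it trivial, you justify it by continuity), and the $r \to \infty$ limit is handled by rewriting $E$ as $2\gamma r / \exp(\gamma r^2)$ and applying L'H\^opital once to the $\infty/\infty$ form, exactly as in the paper's proof. The alternative substitution $s = \gamma r^2$ you mention is a fine elementary variant but does not change the substance of the argument.
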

\begin{proof}
Equation \ref{eq_th01} is omitted since it is trivial.
Since Equation \ref{eq_th11} yields an indeterminate form, we represent Equations \ref{eq_e_efs} and \ref{eq_euc_e} as  
\begin{align}
E(\| \bm{p}_i - \bm{p}\|_2) = \frac{\hat{E}(\| \bm{p}_i - \bm{p}\|_2)}{\check{E}(\| \bm{p}_i - \bm{p}\|_2)} = \frac{2 \gamma  \| \bm{p}_i - \bm{p} \|_2}{\exp(\gamma \| \bm{p}_i - \bm{p}\|_2^2)}. \nonumber
\end{align}
Using Generalized L\textquoteright H\^opital\textquoteright s rule~\cite{ref_lopi}, we obtain
\begin{align}
&\lim_{\| \bm{p}_i - \bm{p}\|_2 \rightarrow \infty} E(\| \bm{p}_i - \bm{p}\|_2) \nonumber \\
&\qquad = \lim_{\| \bm{p}_i - \bm{p}\|_2 \rightarrow \infty} \frac{\mathrm{d} \hat{E}(\| \bm{p}_i - \bm{p}\|_2) / \mathrm{d} \| \bm{p}_i - \bm{p}\|_2}{\mathrm{d} \check{E}(\| \bm{p}_i - \bm{p}\|_2)/\mathrm{d} \| \bm{p}_i - \bm{p}\|_2}\nonumber \\
&\qquad = \lim_{\| \bm{p}_i - \bm{p}\|_2 \rightarrow \infty} \frac{1}{\| \bm{p}_i - \bm{p}\|_2 \exp(\gamma \| \bm{p}_i - \bm{p}\|_2^2)} = 0. \nonumber
\end{align}
\end{proof}
This theorem indicates that when the distance is either too short or too long, $E(\| \bm{p}_i - \bm{p}\|_2)$, the effect on SVM-Grad-CAM $\bm{S}$, converges to zero.
In other words, maintaining a moderate distance is important.
Therefore, we consider the distance $\| \bm{p}_i - \bm{p}\|_2$ that maximizes $E(\| \bm{p}_i - \bm{p}\|_2)$.
This is explained in the following theorem.
\begin{thm} \label{th_1}
\begin{align}
\argmax_{\| \bm{p}_i - \bm{p}\|_2 \in [0, \infty)} E(\| \bm{p}_i - \bm{p}\|_2) &= \frac{1}{\sqrt{2\gamma}}\label{eq_argmax}\\
\max_{\| \bm{p}_i - \bm{p}\|_2  \in [0, \infty)} E(\| \bm{p}_i - \bm{p}\|_2) &= \sqrt{2\gamma} \exp\left(-\frac{1}{2}\right) \label{eq_max}
\end{align}
\end{thm}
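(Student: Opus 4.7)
The plan is to treat $E$ as a single-variable function of the nonnegative scalar $r = \|\bm{p}_i - \bm{p}\|_2$, as already done in Equation \ref{eq_euc_e}, and reduce the claim to an elementary one-dimensional optimization. Writing $E(r) = 2\gamma r \exp(-\gamma r^2)$, the objective is smooth and strictly positive on $(0,\infty)$, so standard first-order conditions apply.

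First I would differentiate using the product rule:
\begin{align}
E'(r) = 2\gamma \exp(-\gamma r^2) + 2\gamma r \cdot (-2\gamma r)\exp(-\gamma r^2) = 2\gamma \exp(-\gamma r^2)\bigl(1 - 2\gamma r^2\bigr). \nonumber
\end{align}
Since $2\gamma \exp(-\gamma r^2) > 0$ everywhere, the only stationary point on $[0,\infty)$ is the unique solution of $1 - 2\gamma r^2 = 0$, namely $r^\star = 1/\sqrt{2\gamma}$. This immediately gives Equation \ref{eq_argmax}, provided I also verify it is a maximizer rather than a minimizer or saddle.

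Next I would verify global maximality by a sign argument: $E'(r) > 0$ for $r \in [0, r^\star)$ and $E'(r) < 0$ for $r \in (r^\star, \infty)$, so $E$ is strictly increasing then strictly decreasing, hence attains its unique global maximum at $r^\star$. The boundary behavior $E(0)=0$ and $\lim_{r\to\infty}E(r)=0$ from Theorem \ref{th_0} corroborates this and rules out the supremum being attained at the endpoints of $[0,\infty)$. Finally, substituting $r^\star = 1/\sqrt{2\gamma}$ back into $E$ yields
\begin{align}
E(r^\star) = 2\gamma \cdot \frac{1}{\sqrt{2\gamma}} \cdot \exp\!\left(-\gamma \cdot \frac{1}{2\gamma}\right) = \sqrt{2\gamma}\,\exp\!\left(-\frac{1}{2}\right), \nonumber
\end{align}
which is Equation \ref{eq_max}.

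There is no substantive obstacle in this proof; it is a textbook single-variable calculus exercise. The only subtlety worth flagging is implicit in the setup: the domain $[0,\infty)$ for $\|\bm{p}_i - \bm{p}\|_2$ is treated as a free variable, whereas in practice the attainable distances depend on the trained support vectors and the observed PCA features. The theorem characterizes the distance that would maximize sensitivity if it were freely chosen, so the statement should be read as a structural property of the RBF-kernel gradient rather than a guarantee that this optimum is realized for a given trained model.
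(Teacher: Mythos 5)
Your proof is correct and follows essentially the same route as the paper's: compute $\mathrm{d}E/\mathrm{d}\|\bm{p}_i-\bm{p}\|_2 = 2\gamma(1-2\gamma\|\bm{p}_i-\bm{p}\|_2^2)\exp(-\gamma\|\bm{p}_i-\bm{p}\|_2^2)$, identify the unique critical point $1/\sqrt{2\gamma}$, establish global maximality via the increasing-then-decreasing sign argument, and substitute to obtain $\sqrt{2\gamma}\exp(-1/2)$. The only cosmetic difference is that the paper also evaluates the second derivative at the critical point, which your monotonicity argument renders unnecessary.
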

\begin{proof}
The first and second order derivatives of $E(\| \bm{p}_i - \bm{p}\|_2)$ with respect to the distance can be expressed as
\begin{align}
&\frac{\mathrm{d} E(\| \bm{p}_i - \bm{p}\|_2)}{\mathrm{d} \|\bm{p}_i - \bm{p}\|_2} = 2\gamma (1 - 2 \gamma \|\bm{p}_i - \bm{p}\|_2^2) \exp(-\gamma \| \bm{p}_i - \bm{p}\|_2^2), \nonumber \\
&\frac{\mathrm{d}^2 E(\| \bm{p}_i - \bm{p}\|_2)}{\mathrm{d} (\|\bm{p}_i - \bm{p}\|_2)^2} = \nonumber \\ &-4\gamma^2 \|\bm{p}_i - \bm{p}\|_2 (3 - 2\gamma \|\bm{p}_i - \bm{p}\|_2^2) \exp(-\gamma \| \bm{p}_i - \bm{p}\|_2^2). \nonumber 
\end{align}
When the distance is $1/\sqrt{2\gamma}$, we obtain
\begin{align}
\left. \frac{\mathrm{d} E(\| \bm{p}_i - \bm{p}\|_2)}{\mathrm{d} \|\bm{p}_i - \bm{p}\|_2}\right|_{\|\bm{p}_i - \bm{p}\|_2 = \frac{1}{\sqrt{2 \gamma}}} &= 0, \nonumber \\
\left. \frac{\mathrm{d}^2 E(\| \bm{p}_i - \bm{p}\|_2)}{\mathrm{d} (\|\bm{p}_i - \bm{p}\|_2)^2} \right|_{\|\bm{p}_i - \bm{p}\|_2 = \frac{1}{\sqrt{2 \gamma}}} &= -4\sqrt{2} \gamma^{\frac{3}{2}} \exp\left(-\frac{1}{2}\right) < 0. \nonumber
\end{align}
Therefore, $\|\bm{p}_i - \bm{p}\|_2 = 1/\sqrt{2 \gamma}$ is a critical point of $E(\| \bm{p}_i - \bm{p}\|_2)$ and corresponds to a local maximum.
Moreover, since
\begin{align}
\frac{\mathrm{d} E(\| \bm{p}_i - \bm{p}\|_2)}{\mathrm{d} \|\bm{p}_i - \bm{p}\|_2} 
\begin{cases}
>0, &  0 \leq \|\bm{p}_i - \bm{p}\|_2 < \frac{1}{\sqrt{2 \gamma}}\\
<0, &  \|\bm{p}_i - \bm{p}\|_2 > \frac{1}{\sqrt{2 \gamma}}
\end{cases} \nonumber
\end{align}
is satisfied, $E(\| \bm{p}_i - \bm{p}\|_2)$ is a monotonically increasing function when the distance is shorter than the critical point.
In contrast, $E(\| \bm{p}_i - \bm{p}\|_2)$ is a monotonically decreasing function when the distance is longer than the critical point.
Therefore, the critical point $\|\bm{p}_i - \bm{p}\|_2 = 1/\sqrt{2 \gamma}$ corresponds to the global maximum of $E(\| \bm{p}_i - \bm{p}\|_2)$, which means that Equation \ref{eq_argmax} is satisfied.
Moreover, substituting it into $E(\| \bm{p}_i - \bm{p}\|_2)$ yields Equation \ref{eq_max}.
\end{proof}
This theorem indicates that when the distance between the $i$-th support vector $\bm{p}_i$ and the observed PCA feature vector $\bm{p}$ is $1/\sqrt{2 \gamma}$, the effect of $\bm{p}_i$ on SVM-Grad-CAM $\bm{S}$ is maximized.
$\bigcirc$ in Figure \ref{fig_svm1} marks this location.
From Theorem \ref{th_1} and Figure \ref{fig_svm1}, we can understand the SVM hyperparameter $\gamma$ directly affects the SVM-Grad-CAM $\bm{S}$.
Specifically, when $\gamma$ is small, the distance required to maximize the effect is also short.

\begin{figure}[!t]
\begin{algorithm}[H]
  \caption{Procedure for computing Jacobians}
  \label{alg1}
  \begin{algorithmic}[1]
  \REQUIRE 
  The input image: $X$,
  the CNN model: $\Upsilon$, 
  the activation: $f \in \{\text{``Sigmoid''}, \text{``Relu''} \}$,
  the layer number where PCA is applied: $l$,
  the number of principal components: $B$, 
  the covariance matrix: $\bm{\Sigma}$, 
  the average vector: $\bm{u}$, 
  the kernel function: $k \in \{\text{``Linear''}, \text{``RBF''} \}$,
  the trained SVM parameters $\{(\alpha_i, c_i, \bm{p}_i) \mid i \in \mathbb{N}_{\leq I}\}$ and $\gamma$ 
  \ENSURE $\{\frac{\partial  \bm{p}}{\partial  \bm{x}^t} \mid t \in \mathbb{N}_{\le T}\}$, $\{\frac{\partial  a(\bm{p})}{\partial  \bm{x}^t} \mid t \in \mathbb{N}_{\le T}\}$, $\bm{F}$
  \STATE \#\#\# Feature extraction from the input image $X$ \#\#\#
  \STATE $\{\bm{F}^1, \cdots, \bm{F}^T\} \leftarrow \mathrm{GetFeatureMap}(\Upsilon, X)$
  \STATE $\{\bm{x}^1, \cdots, \bm{x}^T\} \leftarrow \mathrm{Flatten}(\{\bm{F}^1, \cdots, \bm{F}^T\})$, Eq.\ref{eq_xk}
  \STATE $\bm{x} \leftarrow \begin{bmatrix}(\bm{x}^1)^\top \cdots (\bm{x}^T)^\top\end{bmatrix}^\top$, Eq. \ref{eq_xvar}
  \STATE $\bm{F} \leftarrow \begin{bmatrix}(\bm{F}^1)^\top  \cdots (\bm{F}^T)^\top\end{bmatrix}^\top$, Eq. \ref{eq_common_gc}
  \STATE $\{(\bm{W}_1, \bm{\beta}_1), \cdots, (\bm{W}_l, \bm{\beta}_l)\} \leftarrow \mathrm{GetParams}(\Upsilon)$, Eq.\ref{eq_yql}
  \STATE \#\#\# Get PCA vector $\bm{p}$ \#\#\#
  \STATE $\{\bm{v}_1, \cdots, \bm{v}_B\} \leftarrow \mathrm{GetEigenVector}(\bm{\Sigma}, B)$, Eq.\ref{eq_21eig}
  \STATE $\bm{V} \leftarrow \begin{bmatrix}\bm{v}_1 &\cdots &\bm{v}_B\end{bmatrix}$, Eq.\ref{eq_21eig}
  \STATE $\bm{q}_1 \leftarrow f(\bm{W}_1\bm{x} + \bm{\beta}_1)$, Eq.\ref{eq_yql}
  \FOR{$j \leftarrow 2$ \TO $l$}
  \STATE $\bm{q}_j \leftarrow f(\bm{W}_j\bm{q}_{j-1} + \bm{\beta}_j)$, Eq.\ref{eq_yql}
  \ENDFOR
  \STATE $\bm{p} \leftarrow \bm{V}^\top (\bm{q}_l - \bm{u})$, Eq.\ref{eq_22pca}
  \STATE \#\#\# Jacobian for the SVM layer \#\#\#
  \IF{$k = \text{``Linear''}$}
  \STATE $\frac{\partial a(\bm{p})}{\partial \bm{p}} \leftarrow \sum_{i=1}^{I} \alpha_i c_i \bm{p}_i^\top$, Eqs.\ref{eq_svm_abstk}, \ref{eq_29}
  \ELSIF{$k = \text{``RBF''}$}
  \STATE $\frac{\partial a(\bm{p})}{\partial \bm{p}} \leftarrow  2 \gamma \sum_{i=1}^{I} \alpha_i c_i \exp(-\gamma \| \bm{p}_i - \bm{p}\|_2^2) (\bm{p}_i - \bm{p})^\top$, Eqs.\ref{eq_svm_abstk}, \ref{eq_rbf_jacc}
  \ENDIF
  \STATE \#\#\# Jacobian for the PCA layer \#\#\#
  \STATE $\frac{\partial \bm{p}}{\partial  \bm{q}_l} \leftarrow \bm{V}^\top$, Eq.\ref{eq_pca_jac_2}
  \STATE \#\#\# Jacobians for the dense layers \#\#\#
  \FOR{$j \leftarrow 2$ \TO $l$}
  \STATE $\bm{\delta}_j \leftarrow \bm{W}_j\bm{q}_{j-1} + \bm{\beta}_j$, Eq.\ref{eq_delta_l}
  \IF{$f = \text{``Sigmoid''}$}
  \STATE $\frac{\partial  \bm{q}_j}{\partial  \bm{q}_{j-1}} \leftarrow \mathrm{diag}\Bigl( (\bm{1} - f(\bm{\delta}_j)) \odot f(\bm{\delta}_j) \Bigr)\bm{W}_j$, Eq.\ref{eq_qlql1}
  \ELSIF{$f = \text{``Relu''}$}
  \STATE $\frac{\partial  \bm{q}_j}{\partial  \bm{q}_{j-1}} \leftarrow \mathrm{diag} ( \bm{1}_{\bm{\delta}_j} )\bm{W}_j$, Eq.\ref{eq_qlql1}
  \ENDIF
  \ENDFOR
  \STATE \#\#\# Jacobians for PCA- and SVM-Grad-CAM \#\#\#
  \FOR{$t \leftarrow 1$ \TO $T$}
  \STATE  $\bm{\delta}_1 \leftarrow \bm{W}_1\bm{x} + \bm{\beta}_1$, Eq.\ref{eq_delta_1}
  \IF{$f = \text{``Sigmoid''}$}
  \STATE $\frac{\partial  \bm{q}_1}{\partial  \bm{x}^t} \leftarrow \mathrm{diag}\Bigl( (\bm{1} - f(\bm{\delta}_1)) \odot f(\bm{\delta}_1) \Bigr) \bm{W}^t_1$, Eq.\ref{eq_p_q1xt}
  \ELSIF{$f = \text{``Relu''}$}
  \STATE $\frac{\partial  \bm{q}_1}{\partial  \bm{x}^t} \leftarrow \mathrm{diag} ( \bm{1}_{\bm{\delta}_1} )\bm{W}^t_1$, Eq.\ref{eq_p_q1xt}
  \ENDIF
  \STATE $\frac{\partial  \bm{p}}{\partial  \bm{x}^t} \leftarrow
  \frac{\partial  \bm{p}}{\partial  \bm{q}_{l}}
  \frac{\partial  \bm{q}_{l}}{\partial  \bm{q}_{l-1}}
  \cdots 
  \frac{\partial  \bm{q}_{2}}{\partial  \bm{q}_{1}}
  \frac{\partial  \bm{q}_{1}}{\partial  \bm{x}^t}$, Eq.\ref{eq_pca_jacc_1}
  \STATE $\frac{\partial  a(\bm{p})}{\partial  \bm{x}^t} \leftarrow \frac{\partial a(\bm{p})}{\partial \bm{p}}
  \frac{\partial  \bm{p}}{\partial  \bm{q}_{l}}
  \frac{\partial  \bm{q}_{l}}{\partial  \bm{q}_{l-1}}
  \cdots 
  \frac{\partial  \bm{q}_{2}}{\partial  \bm{q}_{1}}
  \frac{\partial  \bm{q}_{1}}{\partial  \bm{x}^t}$, Eq.\ref{eq_svm_jacc_ori}
  \ENDFOR
  \RETURN $\{\frac{\partial  \bm{p}}{\partial  \bm{x}^t} \mid t \in \mathbb{N}_{\le T}\}$, 
  $\{\frac{\partial  a(\bm{p})}{\partial  \bm{x}^t} \mid t \in \mathbb{N}_{\le T}\}$, $\bm{F}$
   \end{algorithmic}
\end{algorithm}
\end{figure}

\begin{figure}[!t]
\begin{algorithm}[H]
  \caption{Procedure for PCA- and SVM-Grad-CAM}
  \label{alg2}
  \begin{algorithmic}[1]
  \REQUIRE 
  Feature maps matrix: $\bm{F}$, 
  Jacobian for PCA-Grad-CAM: $\{\frac{\partial  \bm{p}}{\partial  \bm{x}^t} \mid t \in \mathbb{N}_{\le T}\}$, 
  Jacobian for SVM-Grad-CAM: $\{\frac{\partial  a(\bm{p})}{\partial  \bm{x}^t} \mid t \in \mathbb{N}_{\le T}\}$, 
  the number of principal components: $B$
  \ENSURE PCA-Grad-CAM: $\{\bm{P}_b^\oplus \mid b \in \mathbb{N}_{\le B}\}$, $\{\bm{P}_b^\ominus \mid b \in \mathbb{N}_{\le B}\}$, and SVM-Grad-CAM: $\bm{S}$
  \STATE \#\#\# Compute PCA-Grad-CAM \#\#\#
  \FOR{$b \leftarrow 1$ \TO $B$}
  \FOR{$t \leftarrow 1$ \TO $T$}
  \STATE $e^t_b \leftarrow \sum_{i=1}^{MN} \frac{\partial p_b}{\partial x_i^t}$, Eq.\ref{eq_pca_grad_cam}
  \ENDFOR
  \STATE $\bm{e}_b \leftarrow \begin{bmatrix}e_b^1 &\cdots& e_b^T\end{bmatrix}^\top$, Eq.\ref{eq_pca_grad_cam}
  \STATE $\bm{P}_b \leftarrow \bm{e}_b^\top \bm{F}$, Eq.\ref{eq_pca_grad_cam}
  \STATE $\bm{P}_b^\oplus \leftarrow  \mathrm{Relu}(\bm{P}_b), \ \bm{P}_b^\ominus \leftarrow \mathrm{Relu}(-\bm{P}_b)$, Eq.\ref{eq_pca_pm}
  \ENDFOR
  \STATE \#\#\# Compute SVM-Grad-CAM \#\#\#
  \FOR{$t \leftarrow 1$ \TO $T$}
  \STATE $s^t \leftarrow \sum_{i=1}^{MN} \frac{\partial a(\bm{p})}{\partial x_i^t}$, Eq.\ref{eq_svm_grad_cam}
  \ENDFOR
  \STATE $\bm{s} \leftarrow \begin{bmatrix}s^1 &\cdots& s^T\end{bmatrix}^\top$, Eq.\ref{eq_svm_grad_cam}
  \STATE $\bm{S} \leftarrow \mathrm{Relu} \left( \bm{s}^\top \bm{F} \right)$, Eq.\ref{eq_svm_grad_cam}
  \RETURN $\{\bm{P}_b^\oplus \mid b \in \mathbb{N}_{\le B}\}$, $\{\bm{P}_b^\ominus \mid b \in \mathbb{N}_{\le B}\}$, $\bm{S}$
   \end{algorithmic}
\end{algorithm}
\end{figure}

\section{Algorithm}
Procedures for computing PCA- and SVM- Grad-CAM are shown in Algorithms \ref{alg1} and \ref{alg2}.
Algorithm \ref{alg1} is the procedure for computing the Jacobians $\partial \bm{p}/\partial \bm{x}^t$ and $\partial a(\bm{p})/\partial \bm{x}^t$, which are needed to compute the weights of feature maps.
First, the feature maps $\{\bm{F}^1, \cdots, \bm{F}^T\}$ corresponding to the input image $X$ are extracted from the trained CNN $\Upsilon$ using the ``GetFeatureMap'' function.
By flattening them, the input vector $\bm{x}$ is obtained.
After that, the weights and biases, $\{(\bm{W}_1, \bm{\beta}_1), \cdots, (\bm{W}_l, \bm{\beta}_l)\}$, for the affine transformations are obtained using the ``GetParams'' function.
The above procedures correspond to lines 1--6 of Algorithm \ref{alg1}.

Next, the procedures related to the PCA transformation are carried out.
Specifically, the input vector $\bm{x}$ is compressed into a $B$-dimensional PCA feature vector $\bm{p}$ using the covariance matrix $\bm{\Sigma}$ and the eigenmatrix $\bm{V}$, as shown in lines 7--14 of Algorithm \ref{alg1}.
After that, the Jacobians of the SVM layer (lines 15--20 of Algorithm \ref{alg1}), the PCA layer (lines 21--22 of Algorithm \ref{alg1}), and the dense layers (lines 23--31 of Algorithm \ref{alg1}) are computed.
From these, the closed-form Jacobians of $\{\frac{\partial  \bm{p}}{\partial  \bm{x}^t} \mid t \in \mathbb{N}_{\le T}\}$ and $\{\frac{\partial  a(\bm{p})}{\partial  \bm{x}^t} \mid t \in \mathbb{N}_{\le T}\}$ are computed using the gradient chain rule.
Finally, these Jacobians and the feature maps $\bm{F}$ are returned.
The above procedures are indicated in lines 32--43 of Algorithm \ref{alg1}

Algorithm \ref{alg2} shows the exact procedures for computing PCA- and SVM- Grad-CAM using the Jacobians.
First, using $\{\frac{\partial  \bm{p}}{\partial  \bm{x}^t} \mid t \in \mathbb{N}_{\le T}\}$ returned from Algorithm \ref{alg1}, the PCA-Grad-CAMs $\bm{P}_b^\oplus$ and $\bm{P}_b^\ominus$ corresponding to the $b$-th principal component are computed, as shown in lines 1--9 of Algorithm \ref{alg2}.
After that, using $\{\frac{\partial  a(\bm{p})}{\partial  \bm{x}^t} \mid t \in \mathbb{N}_{\le T}\}$ returned from Algorithm \ref{alg1}, the SVM-Grad-CAM $\bm{S}$ is computed, as shown in lines 10--16 of Algorithm \ref{alg2}.

\section{Experiment}\label{sec_vi}
\subsection{Outline}\label{sec_vi_a}
To verify the effectiveness of PCA- and SVM-Grad-CAMs, we conducted experiments using major datasets, including MNIST~\cite{ref_mnist}, Fashion-MNIST~\cite{ref_fashion_mnist}, Blood-MNIST~\cite{ref_medmnist}, CIFAR-10~\cite{ref_cifar10}, Chest X-Ray~\cite{ref_covid_chest_original}, and Dog vs. Cat~\cite{ref_dog_cat}.
Note that Chest X-Ray dataset was created by Cohen et al.~\cite{ref_covid_chest_original} and has been used in several studies~\cite{ref_appl_covid1}.
The number of class labels, along with the sizes of the training and test datasets, are shown in Table~\ref{tab_1}.
The specific class label names are provided in the Appendix \ref{app_b}.
As the CNN, we adopted the ImageNet-based VGG16~\cite{ref_vgg} and implemented four dense layers immediately after the flatten layer.
The dimensions of the first, second, and third dense layers are 40, 30, and 20, respectively. 
Since the fourth dense layer is used for classification, its dimension is the same as the number of class labels.
The size of the input images was $200 \times 200 \times 3$ for all datasets.
When adopting this input size with VGG16~\cite{ref_vgg}, the Grad-CAM output size becomes $6 \times 6$.
Moreover, mini-batch learning with a batch size of 32 was performed.
In the learning process, all convolutional layers were frozen, and only the dense layers were trained.
An NVIDIA RTX A6000 (48 GB) was used for this experiment.

\begin{figure}[t]
    \begin{center}
        \includegraphics[scale=0.8]{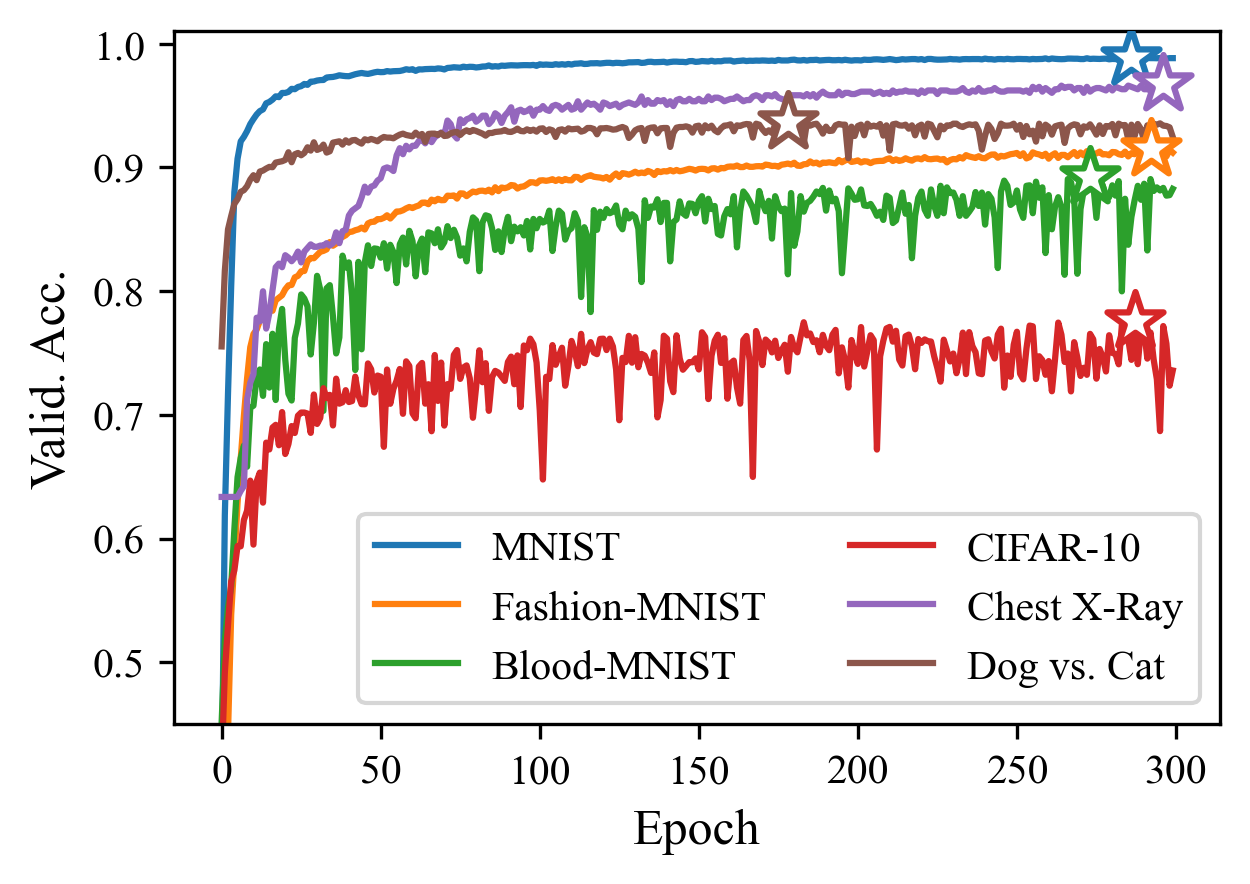}
        \caption{Relationship between epoch and validation accuracies. A star marker $\largestar$ represents a optimal epoch.}
        \label{fig_valacc}
    \end{center}
\end{figure}

\begin{table}[b]
\tabcolsep = 1.5pt
  \caption{Dataset informations and estimation performances.}
  \label{tab_1}
  \centering
  \begin{tabular}{lcccc}
    \bhline{1.0pt}
&Classes /&\multicolumn{3}{c}{Macro F1 score (training/test)} \\
\multicolumn{1}{c}{Dataset} & $N_\mathrm{train}$ / $N_\mathrm{test}$ &CNN&Lin-SVM&RBF-SVM \\ \hline
MNIST \cite{ref_mnist} &10/60,000/10,000& .993/.991 &.986/.980& .997/.989\\
Fashion-MNIST \cite{ref_fashion_mnist} &10/60,000/10,000& .925/.903& .922/.897&.988/.943\\
Blood-MNIST \cite{ref_medmnist} &8/11,959/3,421& .944/.877& .925/.873&.984/.909\\
CIFAR-10 \cite{ref_cifar10} &10/50,000/10,000&.899/.767& .789/.735&.892/.794\\
Chest X-Ray \cite{ref_covid_chest_original} &3/5,144/1,288& .966/.956& .998/.975&.998/.975\\
Dog vs. Cat \cite{ref_dog_cat} &2/18,610/4,652& .967/.931& .995/.930&.995/.931\\
    \bhline{1.0pt}
    \multicolumn{5}{r}{$N_\mathrm{train}$: training samples, $N_\mathrm{test}$: test samples}
  \end{tabular}
\end{table}

Stochastic Gradient Descent (SGD)~\cite{ref_alexnet} was used as the optimizer, cross-entropy was used as the loss function, and the learning rate was set to $10^{-3}$.
We allocated 20\% of the training data as validation data and searched for the optimal number of epochs that yielded the highest validation accuracy.
The number of epochs was searched in the range from 1 to 300.
The PCA layer was applied to the second dense layer ($l=2$), and the dimension of the PCA feature vector was set to $B=3$.
In general, the dimensions are determined based on the contribution ratio computed from the eigenvalues. However, since the objective of this study is to evaluate PCA-Grad-CAM, the dimension of the PCA feature vector was set to the same value for all datasets.
With a 3-dimensional PCA feature vector, the feature space can be visually inspected.
Note that the PCA transformation matrix was constructed using only the training dataset; the test dataset was not used.
As the SVM inserted after the PCA layer, we adopted two types: Linear SVM (Lin-SVM) and RBF-SVM. 
The hyperparameters of these SVMs were set as follows: the cost parameter was 1, and $\gamma$ was 1.
Since our objective was to verify SVM-Grad-CAM, no hyperparameter tuning was performed for the SVMs.
For the above models, PCA-Grad-CAM and SVM-Grad-CAM were implemented.

\subsection{Result and discussion}
\subsubsection{Model development}
The relationship between epochs and validation accuracy is shown in Figure \ref{fig_valacc}.  
This figure indicates that the validation accuracies have converged.
The performances obtained by adopting the epoch that maximized the validation accuracy are shown in Table \ref{tab_1}.  
From this table, it can be seen that the models exhibit good generalization performance.  
Therefore, we decided to use these CNNs for the experiments.

Next, the contribution ratios of the three principal components used in PCA are shown in Figure \ref{fig_eigen}.  
These values are normalized eigenvalues, calculated as $\lambda_b^\prime = 100 \times \lambda_b/\sum_b \lambda_b$.
Note that $\lambda_b$ denotes the $b$-th eigenvalue.
Figure \ref{fig_eigen} shows that for models trained on the MNIST, Fashion-MNIST, Blood-MNIST, and CIFAR-10 datasets, the first to third principal components have relatively high contribution ratios. 
In contrast, for models trained on the Chest X-Ray and Dog vs. Cat datasets, only the first principal component has a high contribution ratio.
The three-dimensional PCA feature spaces of all the models are shown in Figure \ref{fig_s_train_test}.
This figure indicates that the scatter plots for each class are somewhat separated.
However, in the Dog vs. Cat model, PCA features such as $p_2$ and $p_3$ with low contribution ratios do not seem to improve classification performance.
Therefore, we interpret that PCA features with large contribution ratios are important.
Note that the total contribution ratios of the 1--3 principal components are greater than 75\% for all the models.

Previous studies have shown that incorporating a PCA layer into a CNN can improve its classification performance~\cite{ref_pca_svm1}\cite{ref_pca_svm2}\cite{ref_pca_svm3}\cite{ref_pca_svm4}\cite{ref_pca_svm5}\cite{ref_pca_svm6}.
Figure \ref{fig_s_train_test} also demonstrates the effectiveness of PCA.
However, we cannot know the focus regions of each principal component.
PCA-Grad-CAM is a method that reveals this, and the results of its application are described in the next section.

\begin{figure}[t]
    \begin{center}
        \includegraphics[scale=0.85]{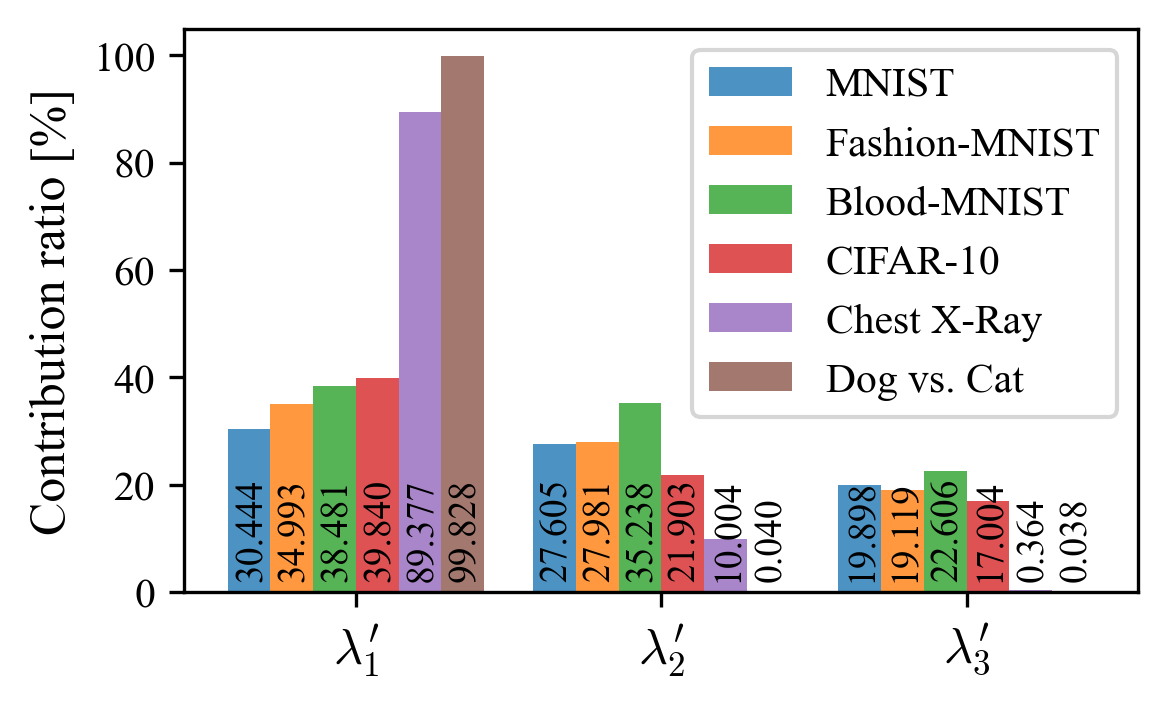}
        \caption{
        Contribution ratios of the eigenvectors $\bm{v}_1$, $\bm{v}_2$, and $\bm{v}_3$.
Here, $\lambda_b^\prime$ represents the normalized eigenvalues, defined as $\lambda_b^\prime = 100 \times \lambda_b / \sum_b \lambda_b$.
        }
        \label{fig_eigen}
    \end{center}
\end{figure}

\begin{figure*}[t]
    \centering
    \includegraphics[scale=0.65]{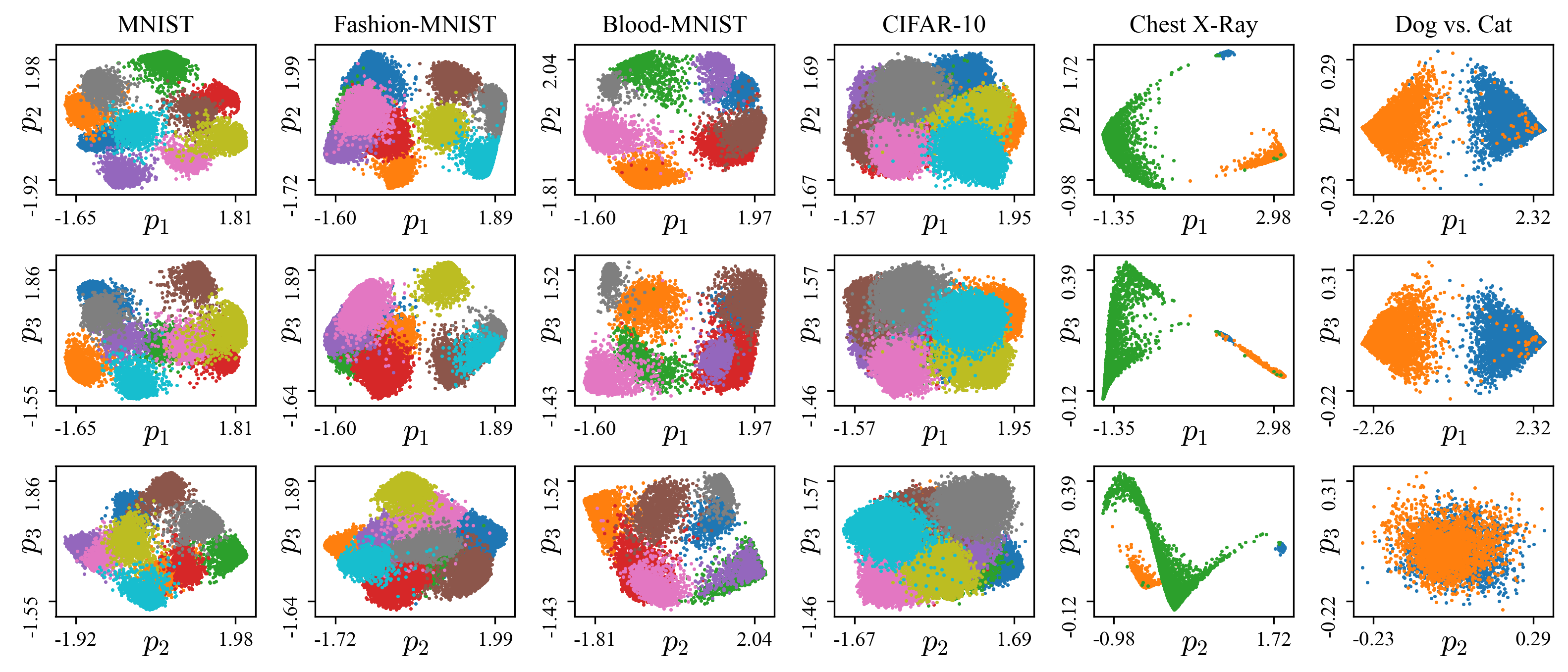}
    \vspace{2mm}
    \includegraphics[scale=0.65]{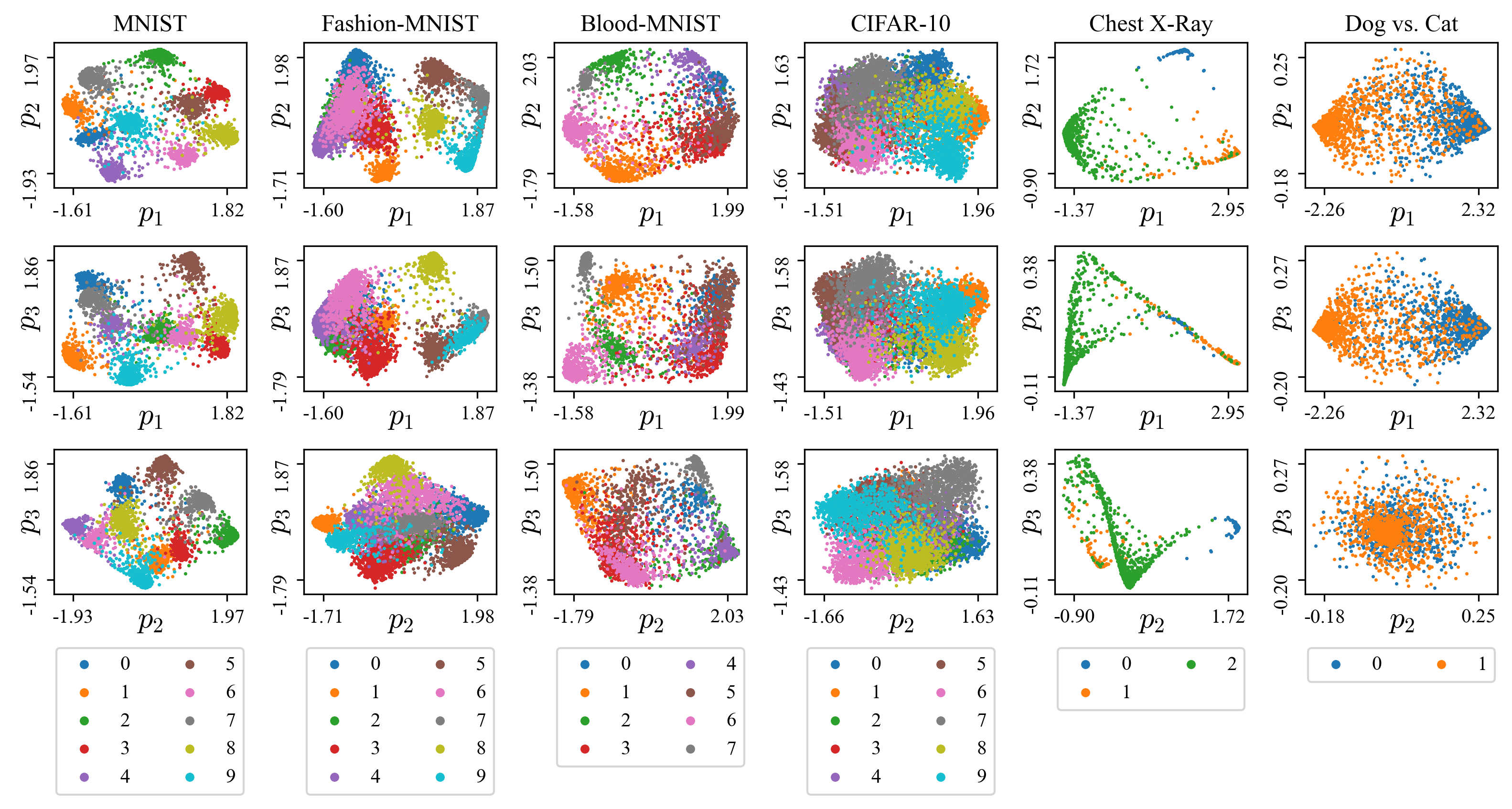}
    \caption{
    Three-dimensional PCA feature spaces consisting of $\bm{p} = \begin{bmatrix}p_1 \ p_2 \ p_3\end{bmatrix}^\top$. 
    The meanings of the labels from 0 to 9 are described in the Appendix \ref{app_b}.
    The top and bottom figures are scatter plots corresponding to the training and test data, respectively.
    }
    \label{fig_s_train_test}
\end{figure*}

\begin{figure*}[t]
    \begin{center}
        \includegraphics[scale=0.65]{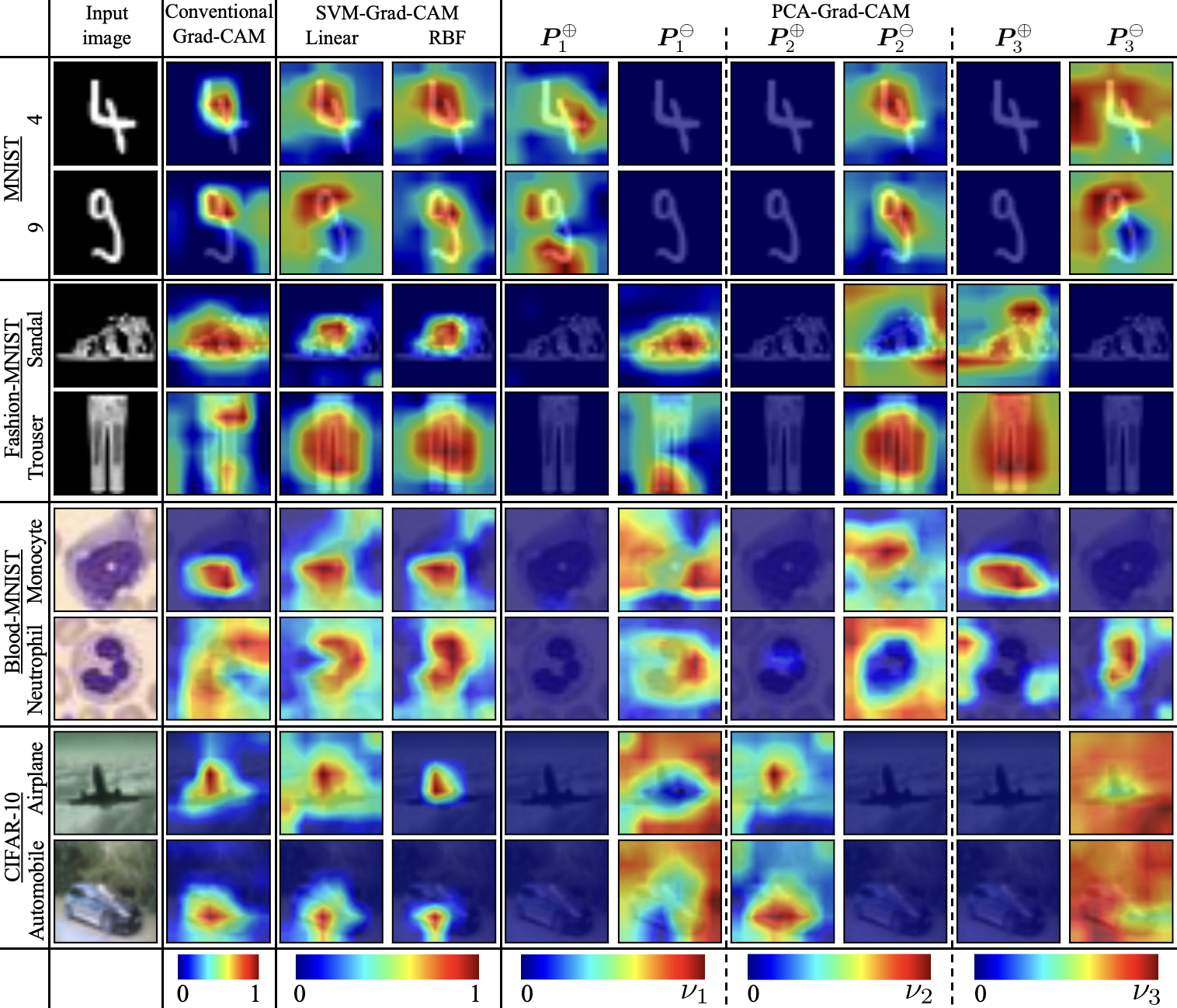}
        \caption{
        Conventional-Grad-CAM, PCA-Grad-CAM, and SVM-Grad-CAM (MNIST~\cite{ref_mnist}, Fashion-MNIST~\cite{ref_fashion_mnist}, Blood-MNIST~\cite{ref_medmnist}, and CIFAR-10~\cite{ref_cifar10}).}
        \label{fig_all_cams}
    \end{center}
\end{figure*}

\begin{figure}[t]
    \begin{center}
        \includegraphics[scale=0.52]{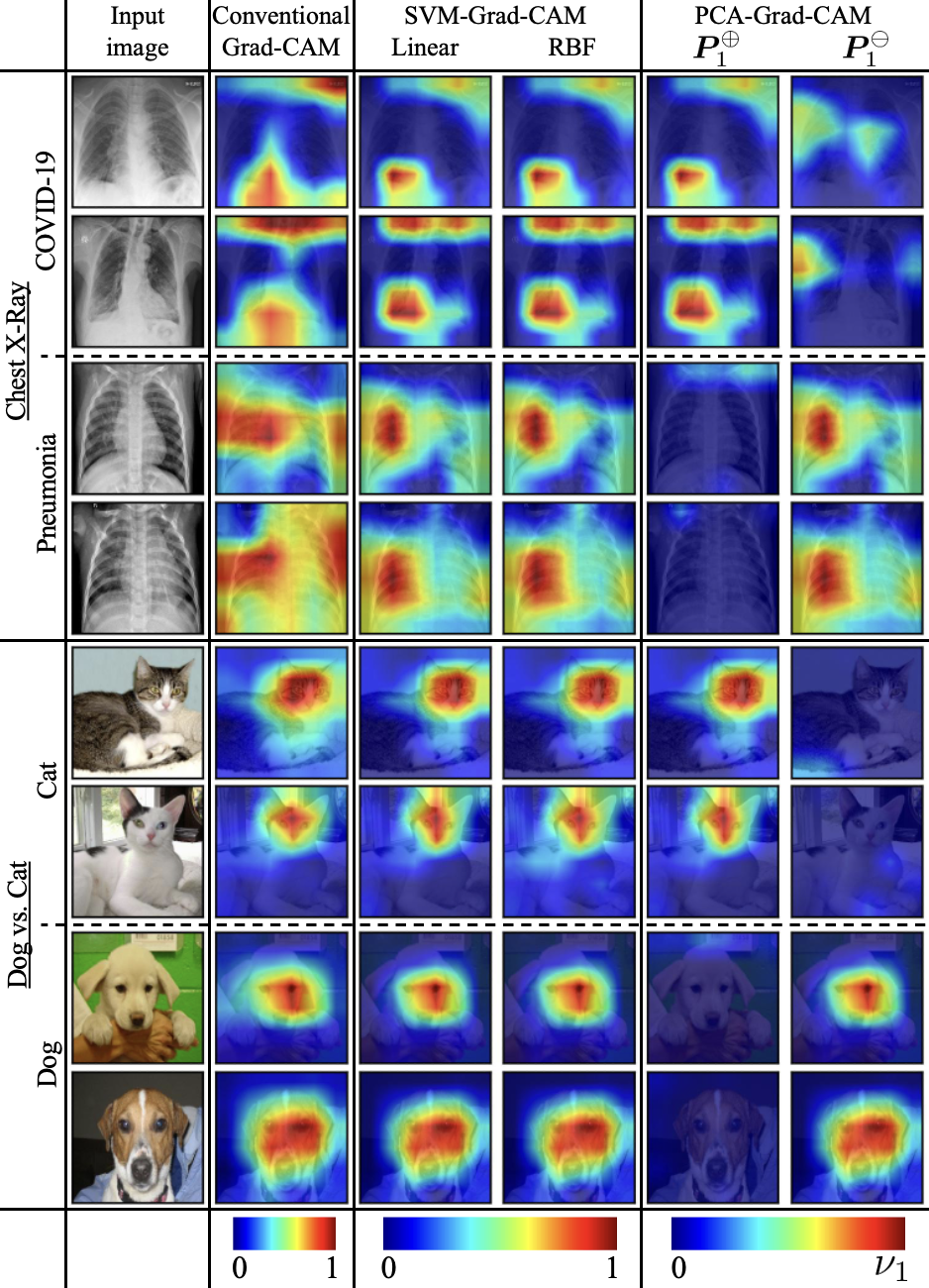}
        \caption{
        Conventional-Grad-CAM, PCA-Grad-CAM, and SVM-Grad-CAM (Chest X-Ray~\cite{ref_covid_chest_original}\cite{ref_covid_chest_original} and Dog vs. Cat~\cite{ref_dog_cat}).}
        \label{fig_all_cams_eigen1}
    \end{center}
\end{figure}

\subsubsection{Visualization}
Next, the proposed methods, PCA-Grad-CAM and SVM-Grad-CAM, were computed using Algorithms \ref{alg1} and \ref{alg2}.
For comparison, the existing method, conventional Grad-CAM~\cite{grad_cam_origin}, was also computed.
Figure \ref{fig_all_cams} shows the cases in which the contribution ratios of the 1--3 principal components are relatively high, namely, the CNNs trained on the MNIST, Fashion-MNIST, Blood-MNIST, and CIFAR-10 datasets.
In many cases, the focus regions of the SVM-Grad-CAM are similar to those of the conventional Grad-CAM.
In contrast, for the ``Trouser'' class of Fashion-MNIST and the ``Neutrophil'' class of Blood-MNIST, the conventional Grad-CAM and the SVM-Grad-CAM produce different results.
The conventional Grad-CAM was computed from a standard CNN, whereas the SVM-Grad-CAM was computed from an SVM embedded within the CNN.
Therefore, it is natural that some results are similar while others are different.

Next, we examine the results of the PCA-Grad-CAM.
$\bm{P}_b^\oplus$ and $\bm{P}_b^\ominus$ represent the Grad-CAMs for the $b$-th principal component $p_b$.
As a general trend, we observe that the focus regions differ across the principal components.
For example, for the class ``4'' in MNIST, $\bm{P}_1^\oplus$ focuses on the right side, $\bm{P}_2^\ominus$ on the top, and $\bm{P}_3^\ominus$ on both the top and the background.
For the class ``9'', $\bm{P}_1^\oplus$ focuses on the left and bottom, $\bm{P}_2^\ominus$ on the center, and $\bm{P}_3^\ominus$ on the top and background.

For the CIFAR-10 dataset, $\bm{P}_1^\ominus$ focuses on the background, while $\bm{P}_2^\oplus$ focuses on the airplane and the car.
$\bm{P}_3^\ominus$ has almost all pixels activated.
This indicates that the gradients of $p_3$ with respect to many feature maps were negative.
Similarly, in the cases of Fashion-MNIST and Blood-MNIST, the focus regions differed among the principal components.
This indicates that the 1--3 principal components extract features from different regions.
Note that there are cases where PCA-Grad-CAM and SVM-Grad-CAM are similar.
This is probably because the SVM performed classification using only that principal component.
For example, in the MNIST class ``4'', the SVM-Grad-CAM is very similar to $\bm{P}_2^\ominus$.
This suggests that the second dimension of the PCA feature vector, $p_2$, likely contributes to the classification.

Next, in Figure \ref{fig_all_cams_eigen1}, we show the results for cases where only the first principal component has a high contribution ratio, namely, the Chest X-Ray~\cite{ref_covid_chest_original} \cite{ref_covid_chest_original} and Dog vs. Cat~\cite{ref_dog_cat} models.
As shown in Figure \ref{fig_eigen}, since only the first principal component has a high contribution ratio, only $\bm{P}_1^\oplus$ and $\bm{P}_1^\ominus$ were computed.
Figure \ref{fig_all_cams_eigen1} indicates that PCA-Grad-CAM and SVM-Grad-CAM are highly similar.
This likely indicates that the SVM performs classification using only the first principal component, $p_1$.
For the Chest X-Ray model, $\bm{P}_1^\oplus$ corresponds to ``COVID-19'' class, and $\bm{P}_1^\ominus$ corresponds to ``Pneumonia'' class.
It can be considered that regions associated with COVID-19 locally increase the first principal component, $p_1$, whereas regions associated with pneumonia locally decrease $p_1$.
Similarly, for the Dog vs. Cat model, $\bm{P}_1^\oplus$ and $\bm{P}_1^\ominus$ correspond to the faces of cats and dogs, respectively.
It can be considered that the face of the cat locally increases the first principal component, $p_1$, whereas the face of the dog locally decreases $p_1$.

Previous studies showed that conventional Grad-CAM~\cite{grad_cam_origin} can visualize the focus regions of a CNN.
However, when PCA or SVM is added to a CNN, the focus regions cannot be visualized.
The proposed PCA-Grad-CAM and SVM-Grad-CAM solved this problem, so we believe this paper has certain value.

\section{Conclusion}
In this study, we proposed a method to compute Grad-CAM for PCA and SVM layers embedded in a CNN. This allows us to identify the focus regions of PCA and SVM, which could not be observed with conventional methods (see Figures \ref{fig_all_cams} and \ref{fig_all_cams_eigen1}).
In addition, using the gradient chain rule, we derived the closed-form expressions of the Jacobians required for PCA- and SVM-Grad-CAM (see Table \ref{tab_full}).

Note that there are many other dimensionality reduction and class estimation methods applicable to CNN features besides PCA and SVM.
In the future, we aim to extend Grad-CAM to these methods as well.
Although the proposed method is an extension of Grad-CAM, the Jacobians derived in this study (see Table \ref{tab_0}) can also be directly used to visualize PCA and SVM layers in other first-order derivative-based visualization methods.
Specifically, by slightly modifying the formulations of XGrad-CAM~\cite{ref_xgrad_origin} and HiRes-CAM~\cite{ref_hirescom_origin}, these methods can also be applied to PCA and SVM layers.
This will be reported in a future study.
However, for Grad-CAM++~\cite{ref_gradcam_pp_origin}, which uses higher-order derivatives, and LIFT-CAM~\cite{ref_lift_cam_ori}, which does not rely on derivatives, the PCA and SVM layers cannot be visualized using only the Jacobians.
Extensions that allow the visualization of PCA and SVM layers for such methods are also needed.
We plan to address this issue in future work.

\appendices
\section{mathematical representation} \label{app_a}
In this paper, a vector like $\bm{\zeta} \in \mathbb{R}^Z$ is assumed to be a column vector.
Then, the $z$-th component of $\bm{\zeta}$ is denoted by $(\bm{\zeta})_z$, that is, 
\begin{align}
\bm{\zeta} = \begin{bmatrix} \zeta_1 & \cdots & \zeta_Z\end{bmatrix}^\top \in \mathbb{R}^Z
\Rightarrow (\bm{\zeta})_z = \zeta_z, \forall z \in \mathbb{N}_{\le Z}. \nonumber
\end{align}
Moreover, the gradient of the output vector $\bm{\eta} = \begin{bmatrix}\eta_1 &\cdots& \eta_H \end{bmatrix}^\top \in \mathbb{R}^H$ with respect to the input vector $\bm{\zeta} = \begin{bmatrix}\zeta_1 &\cdots &\zeta_Z \end{bmatrix}^\top \in \mathbb{R}^Z$ is defined as
\begin{align}
\frac{\partial \bm{\eta}}{\partial \bm{\zeta}} = \begin{bmatrix}
\frac{\partial \eta_1}{\partial \zeta_1} & \cdots &\frac{\partial \eta_1}{\partial \zeta_Z} \\
\vdots & \ddots & \vdots\\
\frac{\partial \eta_H}{\partial \zeta_1} & \cdots &\frac{\partial \eta_H}{\partial \zeta_Z} \\
\end{bmatrix} \in \mathbb{R}^{H \times Z}. \nonumber
\end{align}
In this paper, we refer to it as the ``Jacobian''.
We also define the scalar derivative of a vector and the vector derivative of a scalar as
\begin{align}
\frac{\partial \bm{\eta}}{\partial \zeta_z} &= \begin{bmatrix}
\frac{\partial \eta_1}{\partial \zeta_z}&
\cdots &
\frac{\partial \eta_H}{\partial \zeta_z} 
\end{bmatrix}^\top \in \mathbb{R}^{H}, \nonumber \\
\frac{\partial \eta_h}{\partial \bm{\zeta}} &= \begin{bmatrix}
\frac{\partial \eta_h}{\partial \zeta_1} & \cdots &\frac{\partial \eta_h}{\partial \zeta_Z}
\end{bmatrix} \in \mathbb{R}^{1 \times Z}, \nonumber
\end{align}
respectively. 
By adopting this representation, the Jacobian can be expressed using
\begin{align}
\frac{\partial \bm{\eta}}{\partial \bm{\zeta}} = 
\begin{bmatrix}
\frac{\partial \bm{\eta}}{\partial \zeta_1} & \cdots  & \frac{\partial \bm{\eta}}{\partial \zeta_Z}
\end{bmatrix}=
\begin{bmatrix}
\frac{\partial \eta_1}{\partial \bm{\zeta}} \\ \vdots  \\ \frac{\partial \eta_H}{\partial \bm{\zeta}}
\end{bmatrix} \in \mathbb{R}^{H \times Z}. \nonumber
\end{align}

\section{Class labels}\label{app_b}
\begin{itemize}

\item
Fashion-MNIST~\cite{ref_fashion_mnist}
0: T-shirt/top, 1: Trouser, 2: Pullover, 3: Dress, 4: Coat, 5: Sandal, 6: Shirt, 7: Sneaker, 8: Bag, 9: Ankle boot.

\item
Blood-MNIST~\cite{ref_medmnist}\cite{ref_blood1} 0: basophil, 1: eosinophil, 2: erythroblast, 3: immature granulocytes, 4: lymphocyte, 5: monocyte, 6: neutrophil, 7: platelet.

\item
CIFAR-10~\cite{ref_cifar10}
0: airplane, 1: automobile, 2: bird, 3: cat, 4: deer, 5: dog, 6: frog, 7: horse, 8: ship, 9: truck.

\item
Chest X-ray~\cite{ref_covid_chest_original} 0: COVID-19, 1: normal, 2: pneumonia.

\item 
Dog vs. Cat~\cite{ref_dog_cat} 0: cat, 1: dog.
\end{itemize}

\nocite{*}

% Generated by IEEEtran.bst, version: 1.14 (2015/08/26)

\end{document}